\documentclass{article}

\usepackage{microtype}
\usepackage{graphicx}
\usepackage{subcaption}
\usepackage{booktabs} 

\usepackage{hyperref}

\usepackage[accepted]{icml2026}

\usepackage{amsmath}
\usepackage{amssymb}
\usepackage{mathtools}
\usepackage{amsthm}

\usepackage{colortbl}
\usepackage{color}
\usepackage[most]{tcolorbox} 
\tcbuselibrary{breakable}
\usepackage{multirow}
\usepackage{multicol}
\usepackage{adjustbox}
\usepackage{tabularx}
\usepackage{enumitem}
\usepackage{booktabs}
\usepackage{array}
\usepackage{longtable}
\usepackage{url}
\usepackage{threeparttable}
\usepackage{xspace}
\usepackage{xcolor}
\usepackage{bbm}
\usepackage{algorithm}
\usepackage{algorithmic}

\usepackage[capitalize,noabbrev]{cleveref}

\theoremstyle{plain}
\newtheorem{theorem}{Theorem}[section]
\newtheorem{proposition}[theorem]{Proposition}
\newtheorem{lemma}[theorem]{Lemma}

\theoremstyle{definition}

\newtheorem{assumption}[theorem]{Assumption}
\theoremstyle{remark}

\newcommand{\E}{\mathbb{E}}

\newcommand{\KL}{\mathrm{KL}}

\newcommand{\calK}{\mathcal{K}}

\newcommand{\Qreal}{Q_{\mathrm{real}}}
\newcommand{\Qhard}{Q_{\mathrm{hard}}}

\newcommand{\Lhat}{\widehat{L}}
\newcommand{\Uhat}{\widehat{\Delta U}}

\usepackage[textsize=tiny]{todonotes}

\icmltitlerunning{Active Tabular Augmentation via Policy-Guided Diffusion Inpainting}

\newcommand{\our}{\texttt{TAP}\xspace}
\definecolor{GreedyBlue}{HTML}{003366}   
\definecolor{SampleGreen}{HTML}{006400}  
\definecolor{paleblue}{HTML}{EAF2FB}
\definecolor{paleyellow}{HTML}{FFFBEA}
\definecolor{paleviolet}{HTML}{F4EEFF}
\newcommand{\greedy}[1]{\textbf{\textcolor{GreedyBlue}{#1}}}   
\newcommand{\sample}[1]{\textbf{\textcolor{SampleGreen}{#1}}}         

\usepackage{tikz}
\usetikzlibrary{shapes.geometric, arrows.meta, positioning, calc}
\usepackage{xcolor}

\definecolor{DataLine}{RGB}{74, 85, 104}
\definecolor{DataBg}{RGB}{247, 250, 252}
\definecolor{PolicyLine}{RGB}{44, 82, 130}
\definecolor{PolicyBg}{RGB}{240, 244, 248}
\definecolor{GenLine}{RGB}{221, 107, 32}
\definecolor{GenBg}{RGB}{255, 250, 240}
\definecolor{EvalLine}{RGB}{47, 133, 90}
\definecolor{EvalBg}{RGB}{240, 255, 244}

\begin{document}

\twocolumn[
  \icmltitle{Active Tabular Augmentation via Policy-Guided Diffusion Inpainting}



  \icmlsetsymbol{equal}{*}

  \begin{icmlauthorlist}
    \icmlauthor{Zheyu Zhang}{uni,comp}
    \icmlauthor{Shuo Yang}{uni,comp}
    \icmlauthor{Bardh Prenkaj}{uni,comp}
    \icmlauthor{Gjergji Kasneci}{uni,comp}
  \end{icmlauthorlist}

  \icmlaffiliation{uni}{Technical University of Munich}
  \icmlaffiliation{comp}{Munich Center for Machine Learning (MCML)}

  \icmlcorrespondingauthor{Zheyu Zhang}{zheyu.zhang@tum.de}

  \icmlkeywords{Data Augmentation,Low-Data Regimes,Synthetic Data Generation,Data-Centric AI,Tabular Data}

  \vskip 0.3in
]



\printAffiliationsAndNotice{}  

\begin{abstract}

Generative tabular augmentation is appealing in data-scarce domains, yet the prevailing focus on distributional fidelity does not reliably translate into better downstream models. We formalize a \emph{fidelity-utility gap}: common generative objectives prioritize distributional plausibility, whereas augmentation succeeds only when injected samples reduce the current learner's held-out evaluation loss. This gap motivates learning not just how to generate, but what to generate and when to inject as training evolves. We propose \textbf{\our (Tabular Augmentation Policy)}, which couples diffusion inpainting with a lightweight, learner-conditioned policy to steer generation toward high-utility regions and controls safe injection via explicit gating and conservative windowed commitment. 
Under severe data scarcity, \our consistently outperforms strong generative baselines on seven real-world datasets, improving classification accuracy by up to 15.6 percentage points and reducing regression RMSE by up to 32\%.

\end{abstract}

\section{Introduction}

Tabular data drives decisions in healthcare, finance, science, and operations~\citep{fatima2017survey, DASTILE2020106263, shwartz2022tabular, baldi2014searching}. In exactly these domains, labeled data is often scarce due to privacy constraints, annotation costs, and distribution shift across institutions~\citep{levin2023transfer, bansal2022systematic}. While data augmentation is widely adopted as a remedy for limited data, its application to tabular data is particularly fragile. The heterogeneity of tabular features and the presence of strong inter-column dependencies that encode domain-specific constraints imply that even minor perturbations may invalidate samples or introduce spurious relationships~\citep{cui2024tabulardataaugmentationmachine, 9998482}. A central challenge is therefore to generate valid records that respect domain constraints while achieving high utility for downstream tasks.

\begin{figure}[t!]
    \centering
    \includegraphics[width=0.85\linewidth]{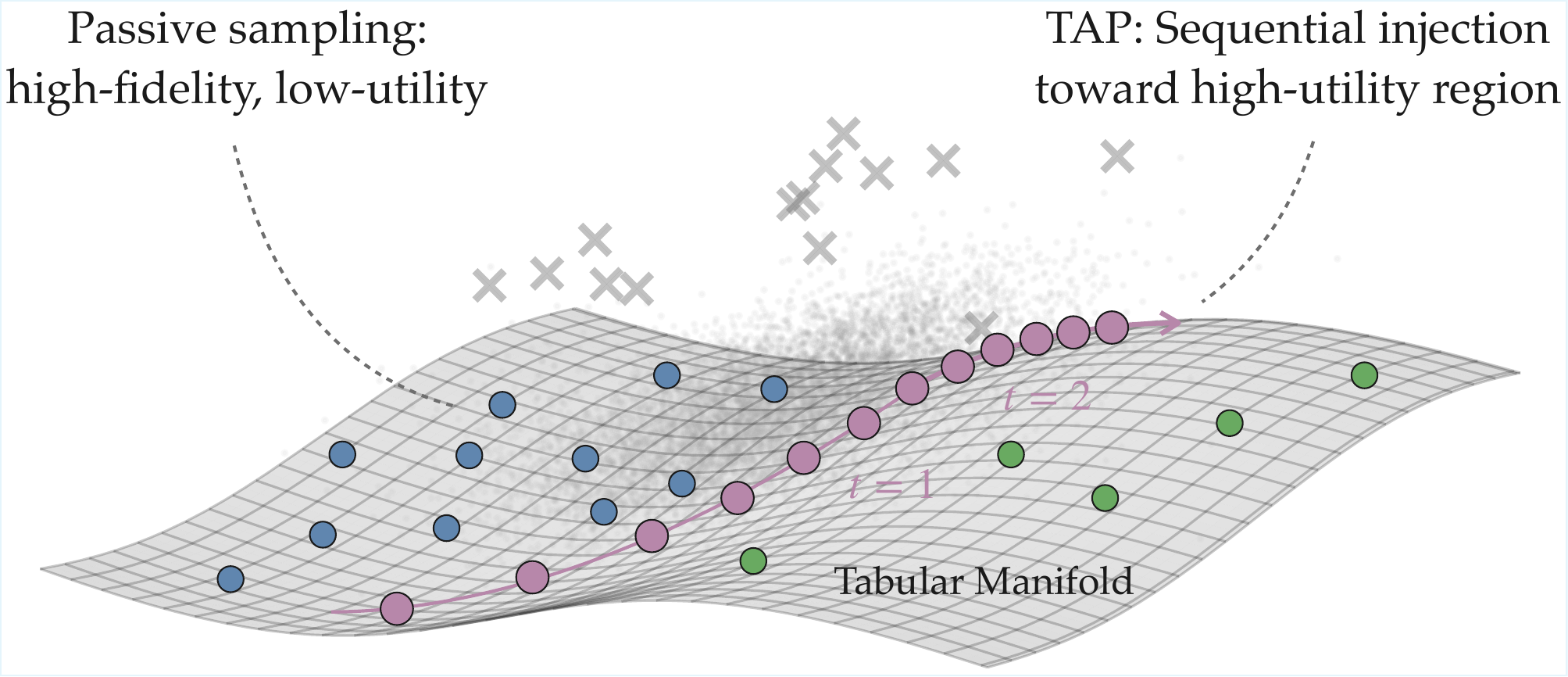}
    \caption{\textbf{Fidelity-utility gap in tabular augmentation.} Fidelity-oriented generators sample high-density regions of $P(X,Y)$, yielding plausible records that can be redundant and may offer limited downstream gain. Utility is state-dependent and tied to real-query loss. \our learns what to generate and when to inject using conservative, feasibility-aware decisions under scarcity.}
    \label{fig:manifold}
    \vspace{-8pt}
\end{figure}

Theoretically, valid records are not scattered uniformly across feature space. Instead, they concentrate on a structured manifold shaped by inter-column dependencies and domain constraints~\citep{tabstruct, mumuni2022data}. To match its distribution, modern generative models, including GANs, VAEs, flows, and diffusion models, often model either the joint distribution $P(X,Y)$ or a conditional variant, achieving strong statistical fidelity~\citep{jiang2025tabstruct}. However, prior work shows that high-fidelity synthetic data do not necessarily improve downstream performance~\citep{onishi2023rethinking}. Utility depends on task-specific relevance rather than solely on statistical similarity to the observed distribution. In contrast, classic methods such as SMOTE~\citep{smote} expand minority regions through simple neighbor interpolation, producing samples that are easily distinguishable from real data yet often effectively \emph{improve} classifiers. These observations expose a fundamental tension that traditional generators are typically trained to maximize fidelity to the joint distribution $P(X, Y)$, whereas successful augmentation should make the learner approximate $P(Y|X)$ better. Therefore, high fidelity is not a sufficient condition for high utility, and under scarcity, it can be misaligned with the learner's needs.

\paragraph{The fidelity-utility gap.}
This disconnect reflects a misalignment between \emph{how} generators are trained, and \emph{how} augmentation is evaluated. Distribution-matching objectives encourage sampling from high-density regions of $P(X, Y)$, where the learner is already confident. Augmentation, however, is judged in downstream usage~\citep{van2001art, onishi2023rethinking}. Under scarcity, the most impactful samples often lie where the model is uncertain, such as decision boundaries or under-covered subpopulations, although uncertainty alone is not sufficient~\citep{bansal2022systematic, alzubaidi2023survey}. Because these regions correspond to low-density areas under $P(X, Y)$, generators trained for distribution-matching tend to under-explore them even when the generator achieves high fidelity.

\paragraph{Our View.}
We view an effective generator as a controllable proposal mechanism for augmentation. Given a training set, it induces a family of feasible, anchor-conditioned proposal kernels over the tabular manifold. After that, we study how to control this proposal family to reduce the downstream error. Since the learner changes after each commitment, the utility of the same generation choice changes in training. This feedback makes augmentation an \emph{active} process and motivates a state-conditioned policy that adapts generation conditions to the evolving learner.

\paragraph{Our Method.}
We instantiate this view in \our\footnote{We make our code publicly available at \url{https://github.com/oooranz/TAP}.}
(short for Tabular Augmentation Policy), as illustrated in Figure~\ref{fig:manifold}. \our uses diffusion inpainting to produce manifold-local proposals by fixing part of an anchor record and regenerating the remaining columns. A set of explicit quality gates enforces hard feasibility by filtering candidates that violate constraints. A lightweight policy then selects generation conditions based on a compact summary of the learner's state, thereby improving expected downstream utility. To improve robustness under noisy utility estimates, \our employs windowed commitment, accumulating admitted candidates in a pool and committing the pool only when its estimated joint benefit exceeds a threshold.

\paragraph{Contributions.}
We turn a strong tabular diffusion generator into a controllable proposal mechanism for augmentation by learning both what to generate and when to inject. Concretely, we (1) formalize augmentation as a sequential control problem under an evolving learner, revealing the fidelity-utility gap and the necessity of state-dependent steering; 
(2) propose \our, a state-conditioned policy that steers diffusion inpainting through target, template, and exploration controls, with safe injection enforced by hard gating and windowed commitment;
(3) demonstrate consistent improvements across seven real-world datasets and five scarcity levels, with accuracy gains up to 15.6 percentage points and RMSE reductions up to 32\% over strong generative baselines. 
Diagnostic analyses confirm that utility concentrates in informative yet learnable regions, validating our design principles.

\section{Background \& Motivation}
\label{sec:background}

The central question on the fidelity-utility gap is \emph{what to inject} to maximize downstream benefit rather than \emph{how to generate} more realistic samples. 

\subsection{Formalizing the Fidelity-Utility Gap}
\label{sec:gap}

Let $P$ denote the distribution over feature-label pairs $(x,y)$. Augmentation is evaluated on a labeled query set $\Qreal \sim P$, implemented as a validation split or cross-validation folds. An end-to-end augmentation pipeline induces an injection distribution $Q$ over the proposed and injected synthetic samples. This distribution is determined jointly by the generator and the injection rule, and it may evolve as the training set changes.

Let $f_{\theta}$ denote a predictor parameterized by $\theta$, and let $\theta(D)$ be the parameters obtained by training on $D$. Downstream performance is measured by loss on real queries,
\begin{equation}
\label{eq:real_query_loss}
L(\theta(D)) :=
\frac{1}{|\Qreal|}
\sum_{(x,y)\in \Qreal}
\ell(f_{\theta(D)}(x), y),
\end{equation}
where $\ell$ is the task-appropriate loss. The value of injecting a candidate set $S$ is the marginal utility,
\begin{equation}
\label{eq:marginal_utility}
\Delta U(D,S) := L(\theta(D)) - L(\theta(D\cup S)).
\end{equation}

Fidelity judges plausibility under $P$, but it does not determine which samples an end-to-end pipeline injects. Utility depends on whether injected samples complement $D$ and reduce loss on $\Qreal$. Under scarcity, high-density samples are often redundant, whereas gains tend to come from informative yet learnable regions that are under-covered by $D$. This mismatch motivates learning \emph{what to generate and when to inject}.

Directly optimizing~\Cref{eq:marginal_utility} is intractable because evaluating $\Delta U$ would retrain the learner for each candidate set. We therefore treat a fixed generator as a proposal family and learn a policy that steers admission and commitment toward positive marginal utility as the learner evolves.

\paragraph{A first-order diagnostic of utility.}
To motivate what information a control policy must track, we view injection through a smooth surrogate learner trained by regularized empirical risk minimization. Influence functions~\citep{cook1982residuals,koh2017understanding} describe the first-order effect of adding a training example. Combining this with a Taylor expansion of the real-query loss around $\theta(D)$ yields the diagnostic approximation
\begin{equation}
\label{eq:first_order_utility}
\begin{split}
\Delta U(D,\{z\}) \approx \frac{1}{|D|} \nabla_{\theta} L(\theta(D))^{\top} \\
\times H_D^{-1} \nabla_{\theta} \ell(f_{\theta(D)}(x),y),
\end{split}
\end{equation}
where $H_D$ is the Hessian of the training objective at $\theta(D)$. A derivation is provided in Appendix~\ref{app:first_order_derivation}. We use~\Cref{eq:first_order_utility} only as a diagnostic and do not compute it in the algorithm. Its role is to connect utility to the learner's current errors while highlighting the need to enforce feasibility and avoid redundant injection, which directly motivates our plug-in evaluator, gating, and diversity-aware policy.

\subsection{Design Principles}
\label{sec:principles}

The formalization above suggests that effective augmentation requires explicit optimization of downstream utility, while maintaining feasibility and robustness to noisy estimates. We identify three guiding principles.

\textbf{Principle 1: Two-stage feasibility.}
Tabular records must satisfy soft and hard constraints. Soft feasibility requires candidates to lie near the data manifold, respecting inter-column dependencies learned from data. Hard feasibility requires compliance with domain rules such as valid categorical values, value ranges, and logical consistency. A proposal mechanism should encourage soft feasibility by design, while hard feasibility requires explicit enforcement.

\textbf{Principle 2: Utility-driven selection.}
Not all feasible samples are equally valuable~\citep{bengio2009curriculum}. Equation~\eqref{eq:first_order_utility} suggests that utility depends on how injected samples interact with the learner's current errors. Under scarcity, uncertainty can indicate where gains are possible, but it is not sufficient. Boundary-adjacent samples can be inherently ambiguous and may degrade performance when injected. Selection should therefore target samples that are informative yet learnable, adapting to the learner as it evolves.

\textbf{Principle 3: Conservative sequential injection.}
Marginal utility depends on the current learner and is noisy to estimate, especially under scarcity where a few harmful samples can have outsized impact. We therefore inject conservatively by accumulating admitted samples in a window and updating the committed buffer only when the pooled gain is large enough:
\begin{equation}
\label{eq:window_commitment}
B_{t+K} =
\begin{cases}
B_t \cup P_t^{(K)}, & \Delta U(D_t,P_t^{(K)}) > \tau, \\
B_t, & \text{otherwise}.
\end{cases}
\end{equation}
Here $P_t^{(K)}$ is the pool collected over a window of length $K$, and $\tau$ is a minimum required gain that controls the conservativeness of commitment. In practice, we commit based on a plug-in estimate of the pooled gain with an explicit safety margin.

Diffusion inpainting supports soft feasibility, explicit gating enforces hard feasibility, policy-guided selection targets high-utility regions, and windowed commitment prevents harmful injections. Section~\ref{sec:method} instantiates these principles as a controlled proposal and admission process.

\section{Methodology}
\label{sec:method}
The fidelity-utility gap reveals that effective augmentation requires injecting samples that help the learner, not merely samples that resemble real data. In this section, we formalize this view and introduce \our.

\subsection{Sequential Augmentation as a Controlled Process}
\label{sec:method:formulation}

Recall the real-query loss $L(\theta(D))$ and marginal utility $\Delta U(D,S)$ defined in ~\Cref{eq:real_query_loss,eq:marginal_utility}, respectively. Our objective is to reduce $L(\theta(D))$ through safe injection rather than to match the data distribution.

\paragraph{Committed dataset and decision horizon.}
Augmentation proceeds for $T$ decision steps.
We maintain a committed buffer $B$ and a temporary pool $P$, and denote their values at step $t$ as $B_t$ and $P_t$. The training set at step $t$ is
\begin{equation}
D_t := D_0 \cup B_t.
\label{eq:Dt_def}
\end{equation}
At each step, the policy selects an action that induces a batch proposal distribution. Pointwise feasibility gates reject invalid candidates, and the remaining samples are added to $P_t$. The buffer $B_t$ changes only at commitment times, which is the only mechanism by which augmentation affects the learner. Since $D_t$ changes only at commitment times, we cache $\Lhat_{\psi}(D_t)$ within each window and evaluate $\Lhat_{\psi}(D_t\cup S)$ by forward passes for candidate sets, where $\psi$ is the plug-in evaluator introduced in Section~\ref{sec:policy_opt}. Full pseudocode is provided in Appendix~\ref{app:tap_algo}.

\paragraph{Trajectory objective.}
Let $\pi$ be a policy that maps the learner state $s_t$ to a distribution over actions. We define the trajectory-level objective as the expected final utility
\begin{equation}
\label{eq:traj_objective}
J(\pi) := \E_{\pi}\!\left[ L(\theta(D_0)) - L(\theta(D_T)) \right].
\end{equation}

\paragraph{Utility telescopes over commitments.}
We evaluate pooled utility every $K$ steps and update the training set when the pool passes the commitment rule. Let $0=t_0<t_1<\cdots<t_M=T$ be commitment times, with $M \le \lceil T/K\rceil$, and let $P_i$ denote the pool committed at time $t_{i+1}$. Because the committed buffer changes only at these times,
\begin{equation}
\label{eq:telescoping}
L(\theta(D_0)) - L(\theta(D_T))
= \sum_{i=0}^{M-1} \Delta U(D_{t_i}, P_i).
\end{equation}
A short proof is included in Appendix~\ref{app:theory}. When training is stochastic, the same identity holds in expectation over learner randomness.

\subsection{Manifold-Constrained Proposals via Diffusion Inpainting}
\label{sec:proposal_kernel}

Principle~1 requires a proposal mechanism that respects manifold structure while allowing controlled exploration. We use diffusion inpainting as a proposal operator, adapting techniques originally developed for image completion~\citep{lugmayr2022repaint} to the tabular setting. It produces locally coherent candidates by conditioning on a real anchor and regenerating a subset of columns.

Let $q_{\phi}$ be a diffusion model trained on the real training split and frozen during policy learning. We train it on labeled tables and treat the label as fixed during inpainting, so we never sample labels separately. Given an anchor record from $D_t$, a binary mask $m\in\{0,1\}^d$ over feature columns, and a target condition $c$, diffusion inpainting samples
\begin{equation}
\label{eq:inpaint}
x^{\mathrm{syn}} \sim q_{\phi}(x_m \mid x_{\bar m}, c),
\end{equation}
where $x_{\bar m}$ denotes the fixed columns. This can be viewed as sampling a conditional marginal of the learned joint table distribution, with the label held fixed by the condition.

We implement inpainting by overwriting fixed columns at each reverse diffusion step. Let $x^{(s)}$ denote the sample at reverse step $s$. After proposing $x^{(s-1)}$, we replace the fixed coordinates with the corresponding forward noised anchor values:
\begin{equation}
\label{eq:overwrite}
x^{(s-1)}_{\bar m} \leftarrow \sqrt{\bar{\alpha}_{s-1}}\,x_{\bar m} + \sqrt{1-\bar{\alpha}_{s-1}}\,\epsilon,
\qquad \epsilon \sim \mathcal{N}(0,I),
\end{equation}
where $\bar{\alpha}_s$ is the cumulative noise schedule. This yields stable conditional generation without retraining the backbone.


\paragraph{Action space.}
We parameterize generation by three complementary controls,
\begin{equation}
\label{eq:action_def}
a := (c,\eta,\rho),
\end{equation}
where $c$ selects a target condition, $\eta$ selects a mask template that controls locality, and $\rho\in[0,1]$ controls exploration strength. Concretely, $c$ indexes a class label for classification or a target quantile bin for regression, $\eta$ selects between an explore template and a conservative template, and $\rho$ adjusts how many feature columns are regenerated within the chosen template. Larger $\rho$ yields more diverse proposals, while smaller $\rho$ produces near-anchor samples. Exact templates and the sampling rule are given in Appendix~\ref{app:tap_settings}.

\paragraph{A controlled proposal kernel.}
An action $a=(c,\eta,\rho)$ induces a proposal distribution through anchor selection, mask construction, and diffusion randomness. We write the induced proposal family as
\begin{multline}
\label{eq:proposal_family}
Q_a(\cdot \mid D_t)
=
\mathbb{E}_{x \sim p_{\mathrm{anc}}(\cdot \mid D_t,c)}
\mathbb{E}_{m \sim p_{\mathrm{mask}}(\cdot \mid \eta,\rho)} \\
\times \left[
q_{\phi}(\cdot \mid x, m, c)
\right],
\end{multline}
where $p_{\mathrm{anc}}$ selects anchors from $D_t$ and $p_{\mathrm{mask}}$ samples regeneration patterns from the template indexed by $\eta$.

We draw a candidate batch by sampling independently from $Q_{a_t}(\cdot \mid D_t)$, and we denote this batch level sampling by
\begin{equation}
\widetilde{S}_t \sim \mathcal{K}_{\phi}(\cdot \mid D_t, a_t).
\label{eq:proposal_kernel}
\end{equation}
Actions change $Q_a$ and thus the explored regions of the data manifold. Only committed pools update $B$ and therefore modify the learner through ~\Cref{eq:traj_objective}.

\subsection{Utility-Aligned Selection by Policy Optimization}
\label{sec:policy_opt}

Principle~2 requires selection that adapts to the learner and targets high-utility regions. Directly optimizing $\Delta U(D_t,S_t)$ is intractable because it requires training the downstream learner for each candidate set. We instead use a plug-in evaluator that supports fast conditioning and repeated forward passes on a focused query set.

\textbf{Focused plug-in loss and plug-in utility.}
Let $\Qhard(D_t)\subseteq \Qreal$ be a subset of informative queries selected using the current evaluator. For classification, we select high-entropy queries. For regression, we select high-uncertainty queries. We define the focused plug-in loss
\begin{equation}
\label{eq:plugin_loss}
\Lhat_{\psi}(D_t) := \frac{1}{|\Qhard(D_t)|}\sum_{(x,y)\in \Qhard(D_t)} \ell(f_{\psi(D_t)}(x),y).
\end{equation}
Here $\psi$ is an online evaluation procedure and $\psi(D_t)$ denotes the evaluator conditioned on the current training set $D_t$. We use TabPFN~\citep{hollmann2025tabpfn} as the default evaluator because it supports fast in-context conditioning and repeated forward passes. $\Lhat_{\psi}$ is used only as a ranking signal for candidate pools during policy learning, and the reported gains are measured by retraining standard downstream predictors on the committed augmented set. Under cross-validation, each query fold is evaluated using a context that excludes that fold. We use this loss to define a plug-in estimate of the marginal utility of injecting a candidate set $S$,

\begin{equation}
\label{eq:plugin_utility}
\Uhat_{\psi}(D_t,S) := \Lhat_{\psi}(D_t) - \Lhat_{\psi}(D_t\cup S).
\end{equation}
Under a uniform accuracy condition on $\Lhat_{\psi}$, the induced error on $\Uhat_{\psi}$ is bounded. Details are in Appendix~\ref{app:theory}. 

\paragraph{Admission and expected gain.}
At step $t$, the policy samples $a_t\sim \pi(\cdot\mid s_t)$ and proposals are generated by $\widetilde{S}_t \sim \calK_{\phi}(\cdot\mid D_t,a_t)$. An admission rule yields the admitted set $S_t$. Let $G=1$ denote the event that a proposal passes the gates. By the law of total expectation, the expected plug-in gain factors into feasibility and conditional utility,
\begin{multline}\label{eq:expected_gain_decomp}
    \E\!\left[\Uhat_{\psi}(D_t,S_t)\mid s_t,a_t\right]
    = \Pr(G=1\mid s_t,a_t) \\
    \cdot \E\!\left[\Uhat_{\psi}(D_t,S_t)\mid s_t,a_t,G=1\right].
\end{multline}

\paragraph{State design.}
We encode the learner state as
\begin{equation}
s_t := (\delta_t,u_t,g_t,d_t).
\label{eq:state_def}
\end{equation}
Here $\delta_t$ tracks under-covered targets, $u_t$ summarizes predictive uncertainty on focused queries, $g_t$ estimates recent gate pass statistics, and $d_t$ summarizes redundancy relative to the committed buffer and the current pool to mitigate diminishing returns. These components are motivated by the diagnostic in ~\Cref{eq:first_order_utility} and are designed to help rank actions under ~\Cref{eq:expected_gain_decomp}. Appendix~\ref{app:theory} provides further rationale and Appendix~\ref{app:ablation} validates the design via state ablations.

\paragraph{Preference-based regularized improvement.}
Utility estimates are noisy and heavy-tailed, so we use a KL-regularized policy update against a conservative reference $\pi_{\mathrm{ref}}$. Let $\widehat{A}_t$ be a baseline-corrected advantage derived from $\Uhat_{\psi}$. We optimize
\begin{equation}
\max_{\pi}\ \E\!\left[\widehat{A}_t\right] - \beta\,\KL\!\left(\pi(\cdot\mid s_t)\,\|\,\pi_{\mathrm{ref}}(\cdot\mid s_t)\right).
\label{eq:kl_obj}
\end{equation}
In our implementation, we optimize this objective with a KL-regularized preference-style update using pairwise comparisons derived from $\Uhat_{\psi}$. Implementation details and default hyperparameters are provided in Appendix~\ref{app:tap_settings}.

\subsection{Safe Admission and Conservative Commitment}
\label{sec:safe_admission}

Principle~3 requires robustness to noisy utility estimation and protection against harmful injection under scarcity. We enforce this through pointwise feasibility gates and windowed commitment.

\paragraph{Pointwise gating.}
We use an acceptance function $G(x;D_t)\in\{0,1\}$ that enforces hard feasibility constraints, such as valid categorical values and range checks. Candidates that violate any hard constraint are rejected. The admitted set is
\begin{equation}
\label{eq:gating_set}
S_t := \{x\in \widetilde{S}_t \mid G(x;D_t)=1\}.
\end{equation}
Implementation details of the gates are provided in Appendix~\ref{app:tap_settings}.

\paragraph{Windowed commitment.}
We accumulate admitted samples into a pool over a window of length $K$. Let $P_t^{(K)}$ denote the pool formed within the current window, with a formal definition in Appendix~\ref{app:tap_algo}. At commitment times, we evaluate pooled plug-in utility
\begin{equation}
\label{eq:pooled_plugin_utility}
\Uhat_{K,\psi}(D_t,P_t^{(K)}) := \Lhat_{\psi}(D_t) - \Lhat_{\psi}(D_t\cup P_t^{(K)}),
\end{equation}
and commit only when $\Uhat_{K,\psi}(D_t,P_t^{(K)}) > \tau + \epsilon_t$. Within a window, $D_t$ remains fixed and $\Lhat_{\psi}(D_t)$ is computed once and reused. After each commitment check, we discard the pool and start a new window.

\begin{theorem}[Commitment safety with calibrated plug-in uncertainty]
\label{thm:commit_safety}
Fix a commitment time $t$ with pool $P_t^{(K)}$. Assume we can compute an error bar $\epsilon_t \ge 0$ such that
\begin{equation}
\label{eq:plugin_error_bar}
\Pr\!\left(
\left|\Uhat_{K,\psi}(D_t,P_t^{(K)}) - \Delta U(D_t,P_t^{(K)})\right| \le \epsilon_t
\right) \ge 1-\alpha.
\end{equation}
If \our commits only when $\Uhat_{K,\psi}(D_t,P_t^{(K)}) > \tau + \epsilon_t$, then the committed pool satisfies $\Delta U(D_t,P_t^{(K)}) \ge \tau$ with probability at least $1-\alpha$.
\end{theorem}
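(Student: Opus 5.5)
The argument is a one-line event inclusion followed by a probability bound. Write $\Uhat := \Uhat_{K,\psi}(D_t,P_t^{(K)})$ and $\Delta := \Delta U(D_t,P_t^{(K)})$. Let $E := \{|\Uhat - \Delta| \le \epsilon_t\}$ be the good event from \eqref{eq:plugin_error_bar}; by assumption $\Pr(E)\ge 1-\alpha$. Let $C := \{\Uhat > \tau + \epsilon_t\}$ be the commitment event. We will show $E \cap C \subseteq \{\Delta \ge \tau\}$. The conclusion then follows, read as the natural safety statement
\[
\Pr\bigl(C \cap \{\Delta < \tau\}\bigr) \le \Pr(E^c) \le \alpha .
\]
Equivalently, with probability at least $1-\alpha$, either no commitment happens or the committed pool satisfies $\Delta \ge \tau$.

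\textbf{Inclusion.} On $E \cap C$, the lower half of the error bar gives $\Delta \ge \Uhat - \epsilon_t$. Combining with $\Uhat > \tau + \epsilon_t$ yields $\Delta > \tau$, which is in fact strict. Hence, whenever a commitment happens with $\Delta < \tau$, we must be on $E^c$. This gives $C\cap\{\Delta<\tau\}\subseteq E^c$ and therefore the displayed bound.

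\textbf{Measurability and conditioning.} The randomness here comes from the proposal kernel $\calK_\phi$, the gates, the evaluator, and learner training. Both $\epsilon_t$ and the commitment decision are functions of the same realization. The assumption \eqref{eq:plugin_error_bar} is stated unconditionally for this fixed $t$ and pool, so no union bound over windows is needed. Only the single-window event is used.

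\textbf{Main obstacle.} The only real subtlety is interpretive. A statement of the form ``$\Pr(\Delta\ge\tau \mid C)\ge 1-\alpha$'' would \emph{not} follow, because $C$ may have small probability and be correlated with $E^c$. I would therefore state the theorem explicitly in the joint form above, namely that the probability of a harmful commitment is at most $\alpha$. I would also remark that conditional control would require $\Pr(E\mid C)\ge 1-\alpha$, i.e., an error bar calibrated conditionally on the selection.

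If guarantees over all $M\le\lceil T/K\rceil$ commitment times are wanted, apply a union bound with per-window level $\alpha/M$. Combined with the telescoping identity \eqref{eq:telescoping}, this gives $L(\theta(D_0))-L(\theta(D_T))\ge 0$ with probability at least $1-\alpha$, provided $\tau\ge 0$.
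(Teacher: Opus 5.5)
Your proof is correct and follows essentially the paper's argument. On the calibration event $E_t$, the lower half of the error bar gives $\Delta U(D_t,P_t^{(K)}) \ge \Uhat_{K,\psi}(D_t,P_t^{(K)}) - \epsilon_t > \tau$ whenever commitment occurs, and $\Pr(E_t)\ge 1-\alpha$ then gives the claim. Your explicit joint reading (a harmful commitment has probability at most $\alpha$, which is not a guarantee conditional on committing) is exactly what the paper's proof establishes, and it is a worthwhile clarification of the theorem's wording.
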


Theorem~\ref{thm:commit_safety} turns commitment into a certified decision rule once plug-in uncertainty is calibrated. We estimate $\epsilon_t$ using the focused query set and report calibration results in Appendix~\ref{app:calibration}.

\section{Experiments}
We organize our experiments around three questions: 
(1) whether \our yields reliable downstream gains under scarcity, where augmentation is often fragile;
(2) where high-utility injected samples lie and which injection mechanism best identifies them;
and (3) whether gating and conservative windowed commitment reduce harmful injection.

\begin{table*}[ht]
\centering
\footnotesize
\caption{\textbf{Overall downstream utility under data scarcity.} Classification accuracy (\%) is averaged over six classifiers (LR, KNN, MLP, RF, LightGBM, XGBoost), and regression RMSE is averaged over four regressors (KNN, RF, LightGBM, XGBoost). The anomalous entries of TabDDPM on Ailerons are explained in  Appendix~\ref{app:baseline}.}
\label{tab:mle}
\resizebox{\textwidth}{!}{
\begin{tabular}{lr||c||ccccccc|c}
\toprule
Dataset & $N_{\mathrm{real}}$ &
Real &
SMOTE & TVAE & CTGAN & ARF & SPADA & TabDDPM & TabDiff & \our \\
\midrule

\rowcolor[rgb]{0.95,0.95,0.95}\multicolumn{11}{c}{\texttt{Classification (Accuracy $\uparrow$)}} \\
\midrule

\multirow{5}{*}{MiceProtein}
& 20 
& 36.21{\tiny$\pm$3.96} 
& 41.34{\tiny$\pm$4.22} 
& 36.93{\tiny$\pm$4.29} 
& 32.35{\tiny$\pm$3.24} 
& 36.91{\tiny$\pm$4.92} 
& 37.59{\tiny$\pm$4.83} 
& 34.05{\tiny$\pm$4.44} 
& 37.85{\tiny$\pm$2.78} 
& \greedy{44.60{\tiny$\pm$5.04}} \\

& 50 
& 59.39{\tiny$\pm$2.98} 
& 59.40{\tiny$\pm$3.31} 
& 50.64{\tiny$\pm$3.58} 
& 49.51{\tiny$\pm$4.13} 
& 55.53{\tiny$\pm$3.73} 
& 55.78{\tiny$\pm$2.99} 
& 54.05{\tiny$\pm$4.48} 
& 54.15{\tiny$\pm$2.77} 
& \greedy{61.78{\tiny$\pm$3.08}} \\

& 100 
& 71.96{\tiny$\pm$2.30} 
& 71.27{\tiny$\pm$2.04} 
& 63.59{\tiny$\pm$2.88} 
& 65.13{\tiny$\pm$2.63} 
& 65.01{\tiny$\pm$1.72} 
& 68.86{\tiny$\pm$1.91} 
& 66.95{\tiny$\pm$3.00} 
& 67.21{\tiny$\pm$2.99} 
& \greedy{73.06{\tiny$\pm$3.83}} \\

& 200 
& 86.00{\tiny$\pm$1.99} 
& 85.98{\tiny$\pm$1.83} 
& 78.48{\tiny$\pm$2.08} 
& 80.93{\tiny$\pm$1.83} 
& 81.09{\tiny$\pm$2.27} 
& 84.03{\tiny$\pm$2.26} 
& 81.10{\tiny$\pm$2.15} 
& 80.97{\tiny$\pm$2.48} 
& \greedy{86.51{\tiny$\pm$1.77}} \\

& 500 
& 96.44{\tiny$\pm$0.86} 
& \greedy{96.65{\tiny$\pm$0.88}} 
& 93.75{\tiny$\pm$1.23} 
& 93.71{\tiny$\pm$1.20} 
& 94.56{\tiny$\pm$1.15} 
& 96.13{\tiny$\pm$0.88} 
& 93.81{\tiny$\pm$1.34} 
& 94.99{\tiny$\pm$1.49} 
& 96.11{\tiny$\pm$0.99} \\

\midrule

\multirow{5}{*}{Credit-G}
& 20
& 66.37{\tiny$\pm$4.73}
& 59.06{\tiny$\pm$9.70}
& 65.79{\tiny$\pm$2.89}
& 65.48{\tiny$\pm$4.02}
& 64.25{\tiny$\pm$3.48}
& 57.58{\tiny$\pm$5.70}
& 63.99{\tiny$\pm$2.40}
& 62.84{\tiny$\pm$3.16}
& \greedy{68.13{\tiny$\pm$2.75}} \\

& 50
& 65.29{\tiny$\pm$3.52}
& 67.23{\tiny$\pm$1.92}
& 68.21{\tiny$\pm$1.46}
& 60.43{\tiny$\pm$5.29}
& 66.49{\tiny$\pm$2.07}
& 65.53{\tiny$\pm$4.12}
& 62.79{\tiny$\pm$3.26}
& 67.55{\tiny$\pm$1.89}
& \greedy{69.72{\tiny$\pm$0.51}} \\

& 100
& 67.53{\tiny$\pm$2.43}
& 68.27{\tiny$\pm$1.23}
& 68.65{\tiny$\pm$1.63}
& 67.26{\tiny$\pm$1.68}
& 67.27{\tiny$\pm$1.50}
& 66.09{\tiny$\pm$3.05}
& 64.07{\tiny$\pm$2.19}
& 68.15{\tiny$\pm$2.11}
& \greedy{70.73{\tiny$\pm$1.66}} \\

& 200
& 67.85{\tiny$\pm$3.81}
& 69.33{\tiny$\pm$1.38}
& 69.30{\tiny$\pm$1.52}
& 64.41{\tiny$\pm$2.34}
& 67.22{\tiny$\pm$4.42}
& 66.07{\tiny$\pm$2.90}
& 62.77{\tiny$\pm$5.06}
& 70.13{\tiny$\pm$1.73}
& \greedy{71.35{\tiny$\pm$1.67}} \\

& 500
& 71.17{\tiny$\pm$0.64}
& 71.07{\tiny$\pm$0.74}
& 71.50{\tiny$\pm$0.89}
& 68.06{\tiny$\pm$2.18}
& 69.86{\tiny$\pm$1.15}
& 69.53{\tiny$\pm$1.55}
& 66.25{\tiny$\pm$3.14}
& 72.31{\tiny$\pm$1.62}
& \greedy{74.21{\tiny$\pm$0.77}} \\

\midrule

\multirow{5}{*}{Electricity}
& 20
& 66.09{\tiny$\pm$5.58}
& 61.99{\tiny$\pm$5.89}
& 64.74{\tiny$\pm$4.89}
& 59.70{\tiny$\pm$4.95}
& 67.81{\tiny$\pm$6.74}
& 66.75{\tiny$\pm$8.45}
& 62.23{\tiny$\pm$4.98}
& 60.17{\tiny$\pm$7.94}
& \greedy{69.28{\tiny$\pm$9.21}} \\

& 50
& 69.05{\tiny$\pm$4.10}
& 64.71{\tiny$\pm$4.11}
& 69.09{\tiny$\pm$4.95}
& 63.64{\tiny$\pm$3.07}
& 70.81{\tiny$\pm$5.21}
& 69.61{\tiny$\pm$4.60}
& 66.11{\tiny$\pm$4.36}
& 68.05{\tiny$\pm$4.68}
& \greedy{71.55{\tiny$\pm$4.50}} \\

& 100
& 72.73{\tiny$\pm$3.81}
& 68.21{\tiny$\pm$3.71}
& 72.15{\tiny$\pm$4.62}
& 67.21{\tiny$\pm$3.03}
& 74.02{\tiny$\pm$3.40}
& 72.83{\tiny$\pm$3.77}
& 70.97{\tiny$\pm$2.95}
& 69.49{\tiny$\pm$2.94}
& \greedy{74.73{\tiny$\pm$3.24}} \\

& 200
& 74.95{\tiny$\pm$3.09}
& 70.61{\tiny$\pm$3.02}
& 74.50{\tiny$\pm$3.75}
& 72.16{\tiny$\pm$3.41}
& 75.19{\tiny$\pm$3.15}
& 74.63{\tiny$\pm$3.12}
& 72.07{\tiny$\pm$2.60}
& 72.69{\tiny$\pm$3.60}
& \greedy{75.87{\tiny$\pm$2.51}} \\

& 500
& 76.37{\tiny$\pm$2.17}
& 73.25{\tiny$\pm$2.11}
& 76.45{\tiny$\pm$2.27}
& 75.33{\tiny$\pm$2.49}
& 76.41{\tiny$\pm$2.44}
& 76.37{\tiny$\pm$2.31}
& 74.51{\tiny$\pm$2.07}
& 76.06{\tiny$\pm$1.95}
& \greedy{77.77{\tiny$\pm$2.04}} \\

\midrule

\multirow{5}{*}{Fourier}
& 20
& 38.69{\tiny$\pm$4.10}
& 40.09{\tiny$\pm$4.90}
& 40.63{\tiny$\pm$5.62}
& 28.77{\tiny$\pm$4.14}
& 38.87{\tiny$\pm$5.07}
& 38.69{\tiny$\pm$4.10}
& 30.95{\tiny$\pm$5.12}
& 31.63{\tiny$\pm$6.42}
& \greedy{41.67{\tiny$\pm$7.43}} \\

& 50
& 59.03{\tiny$\pm$2.19}
& 60.71{\tiny$\pm$1.87}
& 52.31{\tiny$\pm$2.56}
& 43.67{\tiny$\pm$3.25}
& 60.75{\tiny$\pm$2.62}
& 59.03{\tiny$\pm$2.19}
& 49.57{\tiny$\pm$3.15}
& 46.23{\tiny$\pm$3.12}
& \greedy{62.91{\tiny$\pm$1.96}} \\

& 100
& 67.23{\tiny$\pm$1.83}
& 68.35{\tiny$\pm$1.77}
& 61.43{\tiny$\pm$1.75}
& 55.08{\tiny$\pm$1.70}
& 68.44{\tiny$\pm$1.41}
& 67.23{\tiny$\pm$1.83}
& 61.54{\tiny$\pm$2.75}
& 58.25{\tiny$\pm$3.18}
& \greedy{68.89{\tiny$\pm$2.55}} \\

& 200
& 72.99{\tiny$\pm$1.31}
& 73.97{\tiny$\pm$1.21}
& 69.13{\tiny$\pm$2.35}
& 67.78{\tiny$\pm$2.63}
& 73.48{\tiny$\pm$1.57}
& 72.99{\tiny$\pm$1.31}
& 69.54{\tiny$\pm$1.23}
& 67.61{\tiny$\pm$2.51}
& \greedy{75.01{\tiny$\pm$1.39}} \\

& 500
& 77.58{\tiny$\pm$1.51}
& \greedy{78.02{\tiny$\pm$1.52}}
& 75.09{\tiny$\pm$1.67}
& 75.19{\tiny$\pm$1.59}
& 77.17{\tiny$\pm$1.91}
& 77.58{\tiny$\pm$1.51}
& 76.21{\tiny$\pm$1.29}
& 75.09{\tiny$\pm$1.73}
& 77.82{\tiny$\pm$1.58} \\

\midrule

\multirow{5}{*}{Steel}
& 20
& 70.81{\tiny$\pm$2.66}
& 72.30{\tiny$\pm$4.11}
& 67.11{\tiny$\pm$2.94}
& 65.83{\tiny$\pm$2.91}
& 68.37{\tiny$\pm$2.75}
& 70.81{\tiny$\pm$2.66}
& 69.59{\tiny$\pm$3.54}
& 75.26{\tiny$\pm$3.41}
& \greedy{77.19{\tiny$\pm$4.55}} \\

& 50
& 78.63{\tiny$\pm$2.71}
& 84.42{\tiny$\pm$3.30}
& 73.24{\tiny$\pm$2.51}
& 72.92{\tiny$\pm$3.27}
& 72.48{\tiny$\pm$3.22}
& 78.63{\tiny$\pm$2.71}
& 81.17{\tiny$\pm$5.77}
& 87.81{\tiny$\pm$3.31}
& \greedy{94.27{\tiny$\pm$2.39}} \\

& 100
& 87.75{\tiny$\pm$2.34}
& 95.73{\tiny$\pm$1.21}
& 77.21{\tiny$\pm$2.03}
& 78.69{\tiny$\pm$2.16}
& 83.71{\tiny$\pm$2.44}
& 87.75{\tiny$\pm$2.34}
& 90.13{\tiny$\pm$3.21}
& 95.36{\tiny$\pm$3.03}
& \greedy{98.47{\tiny$\pm$0.72}} \\

& 200
& 94.92{\tiny$\pm$1.31}
& 98.29{\tiny$\pm$0.45}
& 83.97{\tiny$\pm$1.41}
& 88.23{\tiny$\pm$1.96}
& 94.91{\tiny$\pm$1.81}
& 94.92{\tiny$\pm$1.31}
& 95.51{\tiny$\pm$1.24}
& 98.43{\tiny$\pm$0.63}
& \greedy{98.55{\tiny$\pm$0.54}} \\

& 500
& 98.67{\tiny$\pm$0.47}
& 99.25{\tiny$\pm$0.13}
& 93.44{\tiny$\pm$1.24}
& 95.73{\tiny$\pm$1.84}
& 98.92{\tiny$\pm$0.35}
& 98.67{\tiny$\pm$0.47}
& 98.02{\tiny$\pm$1.24}
& 99.21{\tiny$\pm$0.24}
& \greedy{99.27{\tiny$\pm$0.36}} \\

\midrule

\rowcolor[rgb]{0.95,0.95,0.95}\multicolumn{11}{c}{\texttt{Regression (RMSE $\downarrow$)}} \\
\midrule

\multirow{5}{*}{Ailerons}
& 20
& 1.042{\tiny$\pm$0.19}
& 1.077{\tiny$\pm$0.22}
& 1.046{\tiny$\pm$0.23}
& 1.107{\tiny$\pm$0.21}
& 0.926{\tiny$\pm$0.19}
& 1.065{\tiny$\pm$0.15}
& 1.035{\tiny$\pm$0.21}
& 1.015{\tiny$\pm$0.18}
& \greedy{0.919{\tiny$\pm$0.19}} \\

& 50
& 0.790{\tiny$\pm$0.15}
& 0.833{\tiny$\pm$0.15}
& 0.898{\tiny$\pm$0.19}
& 0.914{\tiny$\pm$0.16}
& 0.762{\tiny$\pm$0.16}
& 0.833{\tiny$\pm$0.12}
& 0.808{\tiny$\pm$0.16}
& 0.920{\tiny$\pm$0.19}
& \greedy{0.737{\tiny$\pm$0.15}} \\

& 100
& 0.596{\tiny$\pm$0.08}
& 0.646{\tiny$\pm$0.08}
& 0.683{\tiny$\pm$0.09}
& 0.756{\tiny$\pm$0.06}
& 0.587{\tiny$\pm$0.09}
& 0.641{\tiny$\pm$0.09}
& 108.419{\tiny$\pm$215.59}
& 0.673{\tiny$\pm$0.08}
& \greedy{0.570{\tiny$\pm$0.08}} \\

& 200
& 0.569{\tiny$\pm$0.06}
& 0.590{\tiny$\pm$0.07}
& 0.620{\tiny$\pm$0.06}
& 0.667{\tiny$\pm$0.07}
& 0.557{\tiny$\pm$0.06}
& 0.580{\tiny$\pm$0.06}
& 83.927{\tiny$\pm$108.43}
& 0.604{\tiny$\pm$0.08}
& \greedy{0.551{\tiny$\pm$0.06}} \\

& 500
& 0.501{\tiny$\pm$0.04}
& 0.522{\tiny$\pm$0.04}
& 0.543{\tiny$\pm$0.04}
& 0.587{\tiny$\pm$0.08}
& 0.499{\tiny$\pm$0.04}
& 0.506{\tiny$\pm$0.04}
& 262.262{\tiny$\pm$225.86}
& 0.516{\tiny$\pm$0.05}
& \greedy{0.493{\tiny$\pm$0.04}} \\

\midrule

\multirow{5}{*}{Insurance}
& 20
& 0.971{\tiny$\pm$0.34}
& 0.952{\tiny$\pm$0.24}
& 1.017{\tiny$\pm$0.24}
& 1.267{\tiny$\pm$0.24}
& 1.145{\tiny$\pm$0.23}
& 1.435{\tiny$\pm$0.19}
& 0.979{\tiny$\pm$0.34}
& 1.067{\tiny$\pm$0.20}
& \greedy{0.885{\tiny$\pm$0.41}} \\

& 50
& 0.937{\tiny$\pm$0.31}
& 0.834{\tiny$\pm$0.28}
& 0.900{\tiny$\pm$0.30}
& 1.248{\tiny$\pm$0.21}
& 1.039{\tiny$\pm$0.21}
& 1.458{\tiny$\pm$0.17}
& 0.944{\tiny$\pm$0.35}
& 0.841{\tiny$\pm$0.27}
& \greedy{0.632{\tiny$\pm$0.22}} \\

& 100
& 0.616{\tiny$\pm$0.09}
& 0.609{\tiny$\pm$0.09}
& 0.702{\tiny$\pm$0.10}
& 1.319{\tiny$\pm$0.17}
& 0.758{\tiny$\pm$0.12}
& 1.265{\tiny$\pm$0.16}
& 0.628{\tiny$\pm$0.09}
& 0.607{\tiny$\pm$0.07}
& \greedy{0.502{\tiny$\pm$0.10}} \\

& 200
& 0.593{\tiny$\pm$0.04}
& 0.611{\tiny$\pm$0.04}
& 0.654{\tiny$\pm$0.04}
& 1.166{\tiny$\pm$0.12}
& 0.674{\tiny$\pm$0.05}
& 1.101{\tiny$\pm$0.14}
& 0.661{\tiny$\pm$0.13}
& 0.567{\tiny$\pm$0.06}
& \greedy{0.459{\tiny$\pm$0.03}} \\

& 500
& 0.593{\tiny$\pm$0.01}
& 0.611{\tiny$\pm$0.03}
& 0.620{\tiny$\pm$0.06}
& 1.099{\tiny$\pm$0.07}
& 0.685{\tiny$\pm$0.05}
& 0.714{\tiny$\pm$0.15}
& 0.709{\tiny$\pm$0.12}
& 0.472{\tiny$\pm$0.10} 
& \greedy{0.468{\tiny$\pm$0.01}} \\

\bottomrule
\end{tabular}}
\end{table*}

\subsection{Setup}
\label{sec:setup_exp}

\paragraph{Datasets and scarcity simulation.}
Dataset statistics are provided in Appendix~\ref{app:datasets}. Following \citet{tabebm}, we subsample training data to simulate scarcity, i.e., $n_{\mathrm{real}} \in \{20, 50, 100, 200, 500\}$, with a 4:1 train-validation split. All experiments are repeated over 5 random splits. For \our, plug-in utility is estimated using $M$-fold cross-validation splits constructed from the real training split. The evaluator is conditioned on $D_t = D_0 \cup B_t$ and never accesses validation or test labels.

\paragraph{Baselines.}
We compare against seven augmentation methods: the interpolation method SMOTE~\citep{smote}, the VAE-based TVAE~\citep{ctgan}, the GAN-based CTGAN~\citep{ctgan}, the tree-based ARF~\citep{arf}, the flow-based SPADA~\citep{spada}, and diffusion-based TabDDPM~\citep{tabddpm} and TabDiff~\citep{tabdiff}. We also report ``Real'', which trains on real data only.

\paragraph{Downstream predictors.}
We evaluate on six downstream predictors: Logistic Regression~\citep{cox1958regression}, KNN~\citep{fix1985discriminatory}, MLP~\citep{gorishniy2021revisiting}, Random Forest~\citep{breiman2001random}, LightGBM~\citep{lgbm}, and XGBoost~\citep{xgb}. During injection, \our estimates plug-in utility with TabPFN~\citep{hollmann2025tabpfn} as a default online evaluator, which supports frequent evaluations without retraining. We report results only on the downstream predictors above so that the final evaluation is independent of the online evaluator. Appendix~\ref{app:ablation} reports results with alternative online evaluators.
\vspace{-5pt}

\paragraph{Protocol.}
For each split, each augmentation method is fit on the real training split. For fair comparison, all methods inject the same budget of $n_{\mathrm{syn}} = 500$ synthetic samples into the real training set. We use TabDiff as the diffusion backbone for \our, trained once and fixed throughout. Downstream predictors are then trained on the augmented set, with the real validation set used for early stopping and the real test set for final evaluation. Appendix~\ref{app:cost} reports computational cost.

\begin{figure*}[t]
    \centering
    \includegraphics[width=\linewidth]{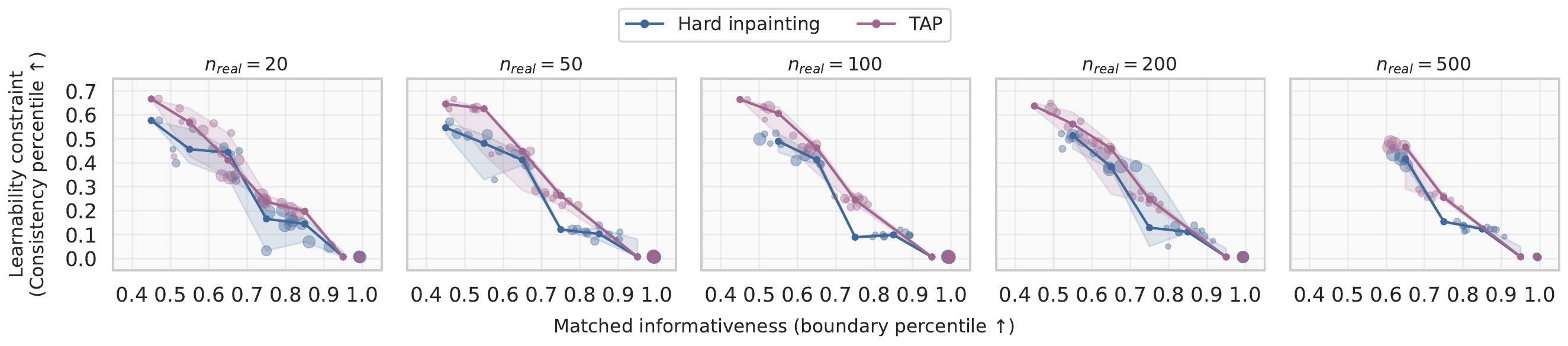}
    \caption{\textbf{Learnability under matched informativeness.} Runs are bucketed by decision-boundary percentile and the vertical axis reports learnability percentile. \our achieves better learnability at comparable levels of informativeness.}
    \label{fig:learnability}
    \vspace{-10pt}
\end{figure*}

\subsection{Utility Gains Across Scarcity Levels}
\label{sec:exp_main}

We first evaluate whether \our delivers reliable downstream gains under scarcity, where tabular augmentation is most valuable and most fragile.

Table~\ref{tab:mle} reports classification accuracy and regression RMSE averaged over six downstream predictors (see Appendix~\ref{app:per_pred_results} for per-predictor results). Across scarcity levels, \our achieves the best or near-best average performance, with the largest gains at $n_{\mathrm{real}}{=}20$. This is the regime where each injected record has an outsized influence, so inconsistent samples can easily degrade performance.

The results are also consistent with the fidelity--utility gap in Section~\ref{sec:gap}. Several generators underperform \texttt{Real} on a non-trivial fraction of datasets and scarcity levels, which indicates that synthetic samples can harm rather than help. No single baseline dominates across settings, and the relative ranking varies with dataset and scarcity. In contrast, \our yields consistently positive improvements, which aligns with optimizing a surrogate of downstream utility rather than relying on distributional fidelity alone.

\subsection{Where High-Utility Samples Lie}
\label{sec:exp_mechanism}

The results above establish that \our outperforms baselines, but they do not reveal what makes injected samples useful. Principle~2 suggests that high-utility samples are informative yet learnable. We test this hypothesis through controlled mechanism comparisons and post-hoc diagnostics.

\paragraph{The injection mechanism matters.}
To disentangle \emph{where/when to inject} from backbone quality, we fix the same diffusion model and vary only the injection rule:
(i) \emph{Global sampling} draws synthetic records from the conditional generator without anchoring to a real example.
(ii) \emph{Random inpainting} selects an anchor uniformly from the current dataset and regenerates a random subset of columns.
(iii) \emph{Hard inpainting} selects high-uncertainty anchors, applies a fixed inpainting configuration, and generates the full budget in one shot. The configuration is specified in Appendix~\ref{app:baseline}.
(iv) \our learns a state-conditioned policy over targets and inpainting templates, and commits only when pooled utility is reliably positive.

\emph{Hard inpainting} is a deliberately strong reference because it already anchors proposals and targets uncertain regions. The remaining gap to \our highlights the benefit of sequential feedback and conservative commitment. Figure~\ref{fig:mechanism} shows a consistent ordering, with utility improving from global sampling to \our.

\begin{figure}[htbp]
    \centering
    \vspace{-8pt}
    \includegraphics[width=0.85\linewidth]{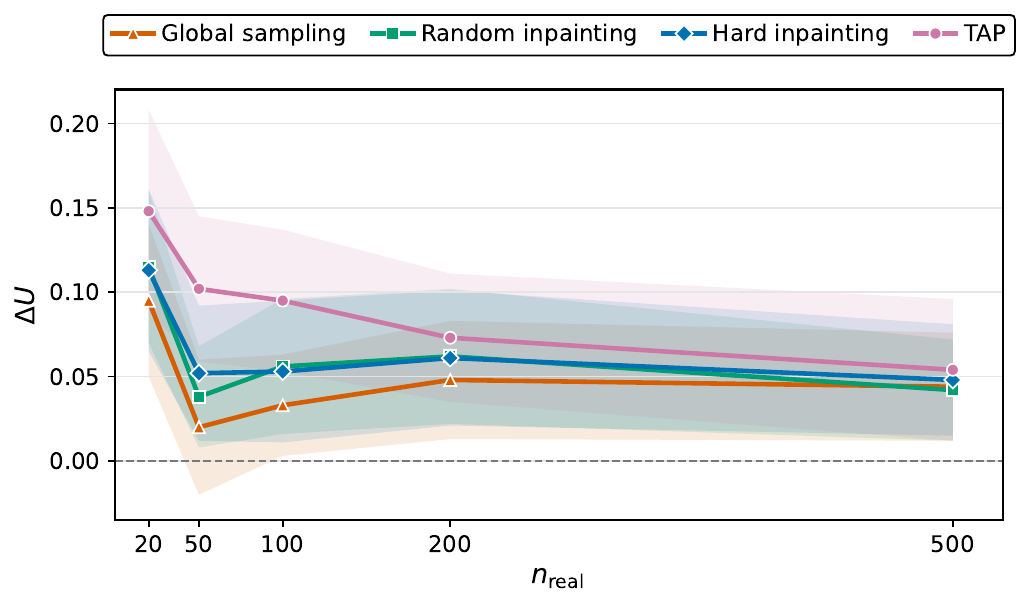}
    \caption{\textbf{Utility gain across injection methods with a shared diffusion backbone.} Shaded regions denote 95\% CIs.}
    \label{fig:mechanism}
    \vspace{-10pt}
\end{figure}

\paragraph{Operationalizing informative yet learnable samples.}
The ladder shows that \our outperforms \emph{Hard inpainting}, yet both anchor on uncertain samples. To explain this gap, we introduce two post-hoc diagnostics that are used only for analysis and are not available to the policy during training. We measure \emph{informativeness} as proximity to the decision boundary:
\begin{equation}
s_{\mathrm{bnd}}(x)=
\begin{cases}
H(p_{\theta}(\cdot\mid x)), & \text{classification}\\
\mathrm{Var}(f_{\theta}(\mathrm{kNN}(x))), & \text{regression}
\end{cases}
\end{equation}
and \emph{learnability} as label consistency:
\begin{equation}
s_{\mathrm{con}}(x,y)=
\begin{cases}
-\log p_{\theta}(y\mid x), & \text{classification}\\
(y-f_{\theta}(x))^2, & \text{regression}
\end{cases}
\end{equation}
Higher $s_{\mathrm{bnd}}$ indicates more uncertain regions, while lower $s_{\mathrm{con}}$ indicates more learnable samples.

Figure~\ref{fig:learnability} compares methods under matched informativeness. At comparable informativeness, \our achieves better learnability, supporting the view that high-utility samples are not merely near the boundary but are informative and learnable.

\paragraph{Interventional check.}
The diagnostics above are correlational. To provide an interventional check, we partition candidates into five learnability bins and inject from each bin in turn. Figure~\ref{fig:bucket} shows that utility concentrates in the middle bins where samples are informative yet learnable. The least learnable tail yields negative utility, which supports Principle~2, which states that boundary proximity alone is insufficient. This pattern is consistent with the observation that anchored inpainting produces fewer severely inconsistent samples than unanchored global draws, especially in the least learnable region.

\begin{figure}[htbp]
    \centering
    \includegraphics[width=\linewidth]{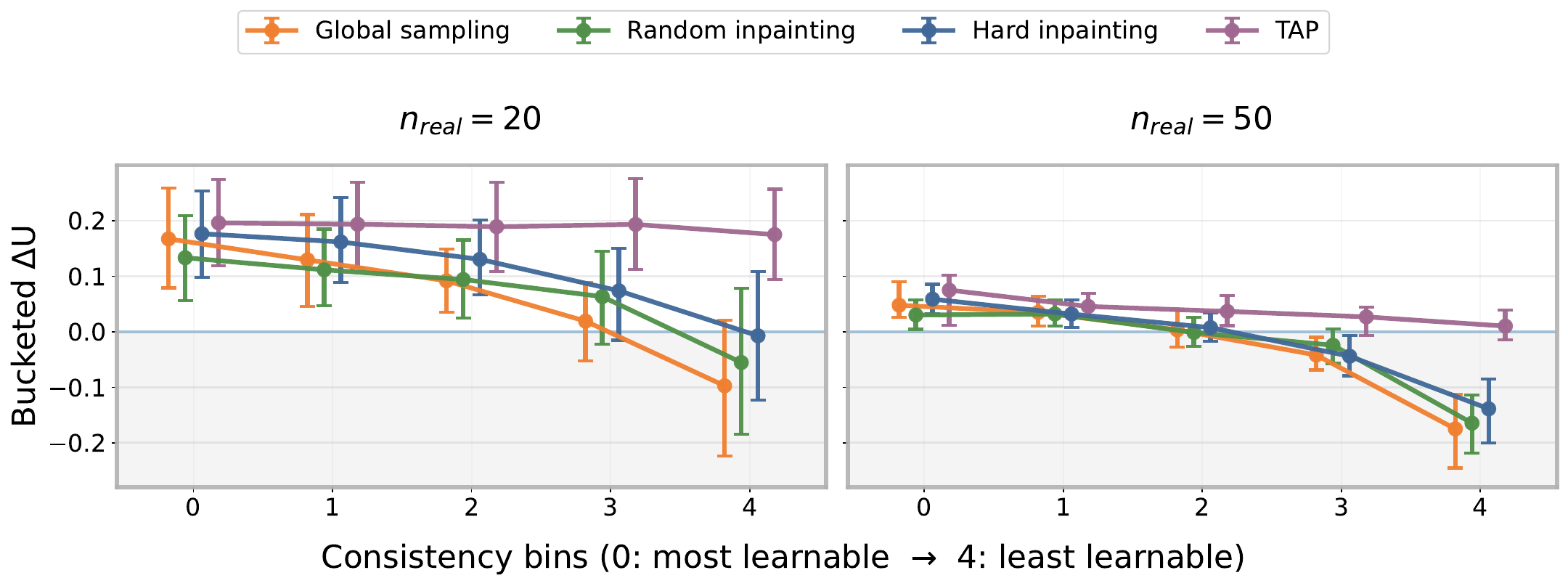}
    \caption{\textbf{Bucketed injection.} Utility by learnability bin where 0 is most learnable, and 4 is least learnable. Gains concentrate in the middle bins, and the harmful tail degrades performance.}
    \label{fig:bucket}
    \vspace{-10pt}
\end{figure}

\subsection{Conservative Commitment Prevents Harm}
\label{sec:exp_safety}

The preceding analysis reveals a harmful tail that must be avoided. Principle 3 argues for conservative commitment via gating and windowed commitment. We test whether these mechanisms are necessary.

\textbf{Setup.}
We compare \our with two ablations that keep all other components fixed. The first removes point-wise gating and admits all candidates. The second removes windowed commitment and commits at every step. We focus on $n_{\mathrm{real}}\in\{20,50,100\}$ where robustness matters most. We report utility gain $\Delta\mathcal{U}$, win-rate as the fraction of runs with positive gain, and tail risk as the mean $s_{\mathrm{con}}$ over the 20\% injected samples with the largest $s_{\mathrm{con}}$.

\textbf{Results.}
Table~\ref{tab:rq3_ablation_tail} shows that both mechanisms contribute to reliability. Removing gating sharply increases tail risk and reduces win rate, consistent with its role in enforcing hard feasibility. Removing windowed commitment has a smaller but consistent negative effect, which is consistent with its role in filtering noisy utility estimates. Together, the two components address distinct failure modes: gating removes individual candidates that are invalid, while windowed commitment rejects pools whose joint gain is uncertain. In addition, plug-in uncertainty calibration achieves near-nominal coverage across datasets, which supports using $\epsilon_t$ as a conservative margin; details are in Appendix~\ref{app:calibration}.

\begin{table}[htbp]
\small
\centering
\renewcommand{\arraystretch}{1.1}
\caption{\textbf{Ablation on safe injection under extreme scarcity.} Utility gain $\Delta\mathcal{U}$, win-rate (\% runs with $\Delta\mathcal{U}>0$), and tail risk (mean inconsistency percentile over the worst 20\% injected samples) for \our and ablations removing gating or commitment.}
\label{tab:rq3_ablation_tail}
\adjustbox{max width=0.95\linewidth}{
\begin{tabular}{l c c c|c|c}
\toprule
\textbf{Method} 
& $\mathbf{n_{\mathrm{real}}{=}20}$ 
& $\mathbf{n_{\mathrm{real}}{=}50}$ 
& $\mathbf{n_{\mathrm{real}}{=}100}$ 
& \textbf{Win-rate} $\uparrow$ 
& \textbf{Tail risk} $\downarrow$ \\
\midrule
\textsc{TAP} 
& 0.140{\tiny$\pm$0.057} 
& 0.095{\tiny$\pm$0.039} 
& 0.097{\tiny$\pm$0.043} 
& 100.0\% 
& 46.7\% \\
\multicolumn{1}{l}{\ \ \, -- Gate} 
& 0.108{\tiny$\pm$0.049} 
& 0.037{\tiny$\pm$0.052} 
& 0.052{\tiny$\pm$0.052} 
& 57.3\% 
& 61.9\% \\
\multicolumn{1}{l}{\ \ \, -- Commit} 
& 0.134{\tiny$\pm$0.051} 
& 0.083{\tiny$\pm$0.043} 
& 0.085{\tiny$\pm$0.043} 
& 85.3\% 
& 47.2\% \\
\bottomrule
\end{tabular}}
\vspace{-10pt}
\end{table}

\our achieves consistent utility gains by selecting informative yet learnable samples, rather than relying on fidelity or boundary proximity alone. Conservative commitment via gating and windowed commitment prevents harmful injection, making augmentation reliable under scarcity.

\section{Related Work}

\textbf{Generative modeling for tabular data.}
A variety of approaches model the joint distribution of heterogeneous tabular features. Early methods such as TVAE and CTGAN~\citep{ctgan} transform mixed-type features into continuous space, while tree-based estimators~\citep{arf} and normalizing flows~\citep{durkan2019neural, spada} operate on preprocessed representations. Among diffusion models, TabDDPM~\citep{tabddpm} embeds categorical features before applying Gaussian diffusion, TabSyn~\citep{zhang2023mixed} operates in a VAE latent space, and TabDiff~\citep{tabdiff} learns separate processes for continuous and categorical features. Language model-based approaches~\citep{borisov2022language, yang-etal-2024-p, zhang-etal-2025-features} leverage pretrained knowledge but face scalability challenges. These methods optimize distributional fidelity, yet high fidelity does not guarantee downstream utility. Under scarcity, passive sampling often produces redundant samples in already-covered regions.


\textbf{Data augmentation and sample selection.}
Tabular augmentation differs from images or text because features are heterogeneous and tightly constrained~\citep{cui2024tabulardataaugmentationmachine, tabstruct}. Classic methods such as SMOTE~\citep{smote} can remain effective despite low fidelity, while Mixup variants~\citep{zhang2018mixup, darabi2021contrastive} require adaptation to mixed-type columns. Recent generative approaches include TabEBM~\citep{tabebm} and LLM-based methods~\citep{cllm}, which often rely on post-hoc filtering. On the selection side, AutoAugment~\citep{cubuk2019autoaugment} and RandAugment~\citep{cubuk2020randaugment} learn policies over fixed transforms, curriculum learning~\citep{hacohen2019power} adapts to learner state, and active learning~\citep{gal2017deep, Ash2020Deep} selects informative instances with oracle labels. Most augmentation methods lack learner feedback~\citep{manousakas2023usefulness}, and most selection methods do not control generation. In contrast, \our treats the generator as a controllable proposal mechanism and steers synthesis conditioned on learner state, rather than passively sampling from a fixed model or injecting task signals into the diffusion trajectory~\citep{tdggd}. It further adopts conservative commitment to inject samples only when utility is consistently indicated, bridging generation and selection for reliable augmentation under data scarcity.

\vspace{-2pt}

\section{Conclusion}
In this work, we introduced \our, a utility-aligned framework for tabular augmentation that directly addresses the fidelity–utility gap. Instead of sampling synthetic records to mimic the joint distribution, \our casts augmentation as sequential control of a proposal kernel. It couples diffusion inpainting with a learned policy that adapts generation to the learner’s state, while hard gating and windowed commitment make injection reliable under noisy utility estimates. 
Across diverse tasks and scarcity regimes, \our consistently outperforms strong generative baselines, improving accuracy by up to 15.6 percentage points, reducing RMSE by up to 32\%, and reducing tail risk from harmful samples. A limitation is that sequential injection introduces additional overhead compared to one-shot generation. We mitigate this cost through window-level caching and training-free evaluators, and we view more efficient utility estimation as an important direction for future work.





\section*{Acknowledgements}
This work was partially supported by the Verband der Vereine Creditreform e.V.. 

\section*{Impact Statement}
Data scarcity in machine learning remains a practical barrier for many real-world deployments, due to cost, privacy, and domain constraints~\citep{alzubaidi2023survey}. While data augmentation can improve robustness~\citep{rebuffi2021data}, it can be fragile in low-data regimes. Each injected record may have outsized influence, and harmful synthetic samples can degrade downstream performance. We believe \our advances reliable augmentation for low-data tabular settings by reducing harmful injections through conservative selection and commitment. This can benefit deployment in domains with limited data, such as finance and healthcare~\citep{alami2020artificial}, and in settings involving underrepresented subgroups~\citep{suresh2021framework}.

At the same time, synthetic data augmentation should be applied carefully in high-stakes settings. Tabular data often reflects measurement noise, institutional practices, and historical inequities, and augmentation can propagate these effects if they are present in the training data. \our is designed to reduce the risk of harmful injections under scarcity by combining manifold-local diffusion inpainting with hard feasibility gates and conservative windowed commitment, so that samples are injected only when utility is consistently indicated. We recommend validating gains on real held-out data, reporting subgroup metrics when available, and documenting intended use and known limitations following established dataset documentation practices~\citep{gebru2021datasheets}.






\bibliography{main}
\bibliographystyle{icml2026}

\newpage
\appendix
\onecolumn

\section{Workflow}

To give a clear view of how \our works, we give a visualisation of the general workflow in~\Cref{fig:tap_framework}.

\begin{figure}[htbp]
\centering
\resizebox{.85\textwidth}{!}{
\begin{tikzpicture}[
    >=Stealth,
    box/.style={rectangle, rounded corners=4pt, line width=1.2pt, align=center, minimum width=3.8cm, minimum height=1.5cm},
    db/.style={cylinder, shape border rotate=90, aspect=0.15, line width=1.2pt, align=center, minimum width=3.8cm, minimum height=1.6cm},
    data/.style={db, fill=DataBg, draw=DataLine},
    policy/.style={box, fill=PolicyBg, draw=PolicyLine},
    generator/.style={box, fill=GenBg, draw=GenLine, dashed}, 
    evaluator/.style={box, fill=EvalBg, draw=EvalLine, dashed},
    safety/.style={box, fill=white, draw=DataLine, solid},
    arrow/.style={->, thick, color=DataLine},
    feedback/.style={->, thick, dashed, color=PolicyLine}
]

    
    \node[data] (dataset) at (0, 3.5) {\textbf{\textcolor{DataLine}{Committed Dataset}}\\$D_t = D_0 \cup B_t$};
    
    \node[policy] (policy) at (14, 3.5) {\textbf{\textcolor{PolicyLine}{Policy Network $\pi_\theta$}}};
    
    \node[generator] (tabdiff) at (14, -2.5) {\textbf{\textcolor{GenLine}{Diffusion Inpainting}}};
    
    \node[safety, minimum height=1.0cm, minimum width=2.4cm] (gates) at (7, -2.5) {\textbf{\textcolor{DataLine}{Hard Gates}}};
    
    \node[evaluator] (tabpfn) at (0, -2.5) {\textbf{\textcolor{EvalLine}{TabPFN Evaluator}}};

    
    \draw[arrow] (dataset) -- node[above, font=\small, text=DataLine] {State $s_t = (\delta_t, u_t, g_t, d_t)$} (policy);
    
    \draw[arrow, color=PolicyLine] (policy) -- node[right, align=left, font=\small, text=DataLine] {Action $a_t$\\- Target $c$\\- Mask $\eta$\\- Explore $\rho$} (tabdiff);
    
    \draw[arrow, color=GenLine] (tabdiff) -- node[above, font=\small, text=DataLine] {Raw Candidates} (gates);
    
    \draw[arrow] (gates) -- node[above, font=\small, text=DataLine] {Admit to Pool $P_t$} (tabpfn);
    
    \draw[->, dashed, color=DataLine] (gates.south) -- ++(0, -0.6) node[below, font=\footnotesize] {Reject Invalid};
    \draw[->, dashed, color=DataLine] (tabpfn.south) -- ++(0, -0.6) node[below, font=\footnotesize] {Discard Pool};
    
    \draw[arrow, color=EvalLine, line width=1.5pt] (tabpfn) -- node[left, align=center, font=\small, text=DataLine] {\textbf{Windowed Commit}\\If $\hat{\Delta U} > \tau + \epsilon_t$} (dataset);

    
    \draw[arrow, color=GenLine, dashed] (dataset.south east) -- (tabdiff.north west) 
        node[pos=0.18, fill=white, inner sep=4pt, font=\small, text=GenLine, rounded corners=3pt] {\textbf{Anchor} Selection $p_{anc}$};

    \draw[feedback] (tabpfn.north east) -- (policy.south west) 
        node[pos=0.82, fill=white, inner sep=4pt, font=\small, text=PolicyLine, rounded corners=3pt] {\textbf{Preference Signal} $\hat{A}_t$};

\end{tikzpicture}
}
\caption{Overview of the \our framework. \our frames data augmentation as a sequential control process. At each step, a learnable policy observes the learner's state to guide a frozen diffusion inpainting kernel. Proposed candidates undergo hard feasibility gating and are accumulated in a temporary pool. A frozen online evaluator assesses the pool's utility, providing advantage signals for preference-based policy optimization and triggering safe windowed commitment to the downstream dataset.}
\label{fig:tap_framework}
\end{figure}

\section{Theoretical Details}
\label{app:theory}

This section gathers the proofs and supporting results referenced in the main text. We first prove the telescoping decomposition in \Cref{eq:telescoping}, then derive an error bound relating plug-in utility to true utility under a sufficient accuracy condition, and finally establish Theorem\ref{thm:commit_safety}. We also provide a sufficient condition that explains the design rationale for our state summary in action ranking. Together, these results offer safety guarantees and design intuition rather than a global optimality guarantee.

\subsection{Proof of Utility Telescoping (\Cref{eq:telescoping})}
\begin{proof}
Let $D_t = D_0 \cup B_t$ and let $0=t_0<t_1<\cdots<t_M=T$ denote the commitment times. For each $i=0,\ldots,M-1$, let $\widetilde{P}_i$ denote the pool accumulated in window $i$ and evaluated at time $t_{i+1}$. Let $A_i\in\{0,1\}$ indicate whether the commitment rule accepts at time $t_{i+1}$, and define the committed pool
\begin{equation}
P_i :=
\begin{cases}
\widetilde{P}_i, & A_i = 1,\\
\emptyset, & A_i = 0.
\end{cases}
\end{equation}
The committed dataset therefore updates as
\begin{equation}
D_{t_{i+1}} = D_{t_i} \cup P_i.
\end{equation}
By the definition of marginal utility and the update above,
\begin{equation}
L(\theta(D_{t_i})) - L(\theta(D_{t_{i+1}})) = \Delta U(D_{t_i},P_i).
\end{equation}
Summing over $i=0,\ldots,M-1$ yields
\begin{equation}
L(\theta(D_0)) - L(\theta(D_T))
= \sum_{i=0}^{M-1} \Delta U(D_{t_i}, P_i),
\end{equation}
which proves ~\Cref{eq:telescoping}.
\end{proof}

\subsection{Plug-in Utility Error Bound}
We state a sufficient condition under which the plug-in objective approximates the true utility.

\begin{assumption}[A sufficient plug-in accuracy condition]\label{ass:loss_accuracy}
There exists $\varepsilon_L \ge 0$ such that for any dataset $D$ reachable by the algorithm,
\begin{equation}
\label{eq:loss_accuracy}
\left|\Lhat_{\psi}(D) - L(\theta(D))\right| \le \varepsilon_L.
\end{equation}
In practice, we replace a uniform $\varepsilon_L$ with a step-dependent error bar estimated from focused queries, which is used in the commitment rule.
\end{assumption}

\begin{lemma}[Induced plug-in utility error]\label{lem:utility_error}
Under Assumption~\ref{ass:loss_accuracy}, for any candidate set $S$,
\begin{equation}\label{eq:utility_error}
\left|\Uhat_{\psi}(D,S) - \Delta U(D,S)\right| \le 2\varepsilon_L.
\end{equation}
\end{lemma}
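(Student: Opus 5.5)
The plan is to expand both utilities by their definitions, cancel nothing, regroup the four loss terms into two plug-in errors, and bound each with Assumption~\ref{ass:loss_accuracy}. By \Cref{eq:plugin_utility} and \Cref{eq:marginal_utility},
\begin{equation*}
\Uhat_{\psi}(D,S) - \Delta U(D,S)
= \bigl(\Lhat_{\psi}(D) - L(\theta(D))\bigr) - \bigl(\Lhat_{\psi}(D\cup S) - L(\theta(D\cup S))\bigr).
\end{equation*}
The triangle inequality then gives
\begin{equation*}
\left|\Uhat_{\psi}(D,S) - \Delta U(D,S)\right| \le \left|\Lhat_{\psi}(D) - L(\theta(D))\right| + \left|\Lhat_{\psi}(D\cup S) - L(\theta(D\cup S))\right|.
\end{equation*}

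Next, I would apply \Cref{eq:loss_accuracy} to each term on the right. Each term is at most $\varepsilon_L$, so the sum is at most $2\varepsilon_L$, which is \Cref{eq:utility_error}.

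The only delicate point is that Assumption~\ref{ass:loss_accuracy} is stated for datasets \emph{reachable by the algorithm}, and we need it at both $D$ and $D\cup S$. For $D=D_t$ this holds by construction. For $D\cup S$, I would interpret ``reachable'' to include every dataset on which the algorithm evaluates $\Lhat_{\psi}$, namely the sets $D_t\cup S$ formed from admitted candidate sets and pools. This is exactly the family over which the lemma is later used, for example in \Cref{eq:pooled_plugin_utility}. If one insists on the narrower meaning (only committed datasets), the lemma should be stated for candidate sets $S$ with $D\cup S$ reachable, and that restriction suffices for every application in the paper. Apart from this scoping remark, the argument is a direct triangle-inequality computation with no real obstacle.
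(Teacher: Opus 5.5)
Your proof is correct and matches the paper's argument: you rewrite the difference as the difference of the two plug-in errors at $D$ and $D\cup S$, apply the triangle inequality, and bound each term by $\varepsilon_L$ via Assumption~\ref{ass:loss_accuracy}. Your remark that the assumption must also hold at $D\cup S$ (so ``reachable'' has to include every dataset on which $\Lhat_{\psi}$ is evaluated) is a fair scoping point, which the paper's proof relies on without stating.
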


\begin{proof}
By definition,
\begin{equation}\label{eq:utility_error_proof_1}
\Uhat_{\psi}(D,S) - \Delta U(D,S)
=
\left(\Lhat_{\psi}(D)-L(\theta(D))\right)
-
\left(\Lhat_{\psi}(D\cup S)-L(\theta(D\cup S))\right).
\end{equation}
Applying the triangle inequality and Assumption~\ref{ass:loss_accuracy} to both terms yields the bound.
\end{proof}

\subsection{Proof of Theorem~\ref{thm:commit_safety}}
\begin{proof}
Let $E_t$ be the event in~\Cref{eq:plugin_error_bar}. On $E_t$,
\begin{equation}
\Delta U(D_t, P_t^{(K)}) \ge \Uhat_{K,\psi}(D_t, P_t^{(K)}) - \epsilon_t.
\end{equation}
If \our commits only when $\Uhat_{K,\psi}(D_t, P_t^{(K)}) > \tau + \epsilon_t$, then on $E_t$ we have $\Delta U(D_t, P_t^{(K)}) \ge \tau$. Since $\Pr(E_t)\ge 1-\alpha$, the claim follows.
\end{proof}

\subsection{A Sufficient Condition for Action Ranking}
This subsection connects ~\Cref{eq:expected_gain_decomp} to the components of $s_t$. Here, we provide a heuristic justification for the action ranking design rather than a formal optimality guarantee.

\paragraph{A practical scoring heuristic for action ranking.}
Equation~\eqref{eq:expected_gain_decomp} decomposes expected gain into a feasibility term and a conditional utility term. In practice, we approximate feasibility using recent gate pass rates and approximate conditional utility using state signals that capture deficits, uncertainty, and redundancy. We combine these signals multiplicatively to form an action score. This heuristic is motivated by the factorized structure of ~\Cref{eq:expected_gain_decomp}, but it does not provide a formal guarantee that score ordering matches the true expected-gain ordering for all action pairs. We therefore treat it as design intuition and validate its effectiveness empirically through state ablations in Appendix~\ref{app:ablation}.

\paragraph{Gate rates.}
The term $\Pr(G=1\mid s_t,a_t)$ depends on how often each template passes validity constraints under the current dataset and scarcity regime.
Tracking recent per-template pass rates provides a simple empirical proxy for this factor that is stable in practice.

\paragraph{Target deficits and uncertainty.}
The conditional component $\E[\Uhat_{\psi}(D_t,S_t)\mid s_t,a_t,G=1]$ is controlled by which targets are under-covered and where the current plug-in evaluator is uncertain. The deficit statistic $\delta_t$ measures miscoverage relative to the desired target mixture, while $u_t$ summarizes predictive uncertainty over targets or bins.

\paragraph{Diversity.}
When admitted samples are near-duplicates of committed points, marginal gains diminish. A diversity score $d_t$ summarizes novelty relative to $B_t$ and the current pool, and it is computed from information available before sampling $a_t$.

\subsection{Derivation of Equation~\eqref{eq:first_order_utility}}
\label{app:first_order_derivation}
This approximation is local and is most accurate when the parameter shift induced by adding one example is small. In the severe-scarcity regime, it can be crude numerically. We therefore use it as a qualitative guide for state and action design, while relying on empirical utility estimation and conservative commitment for reliable decisions. We derive a first-order approximation for the marginal utility of injecting a single example.

Let $D=\{z_i\}_{i=1}^n$ with $z_i=(x_i,y_i)$. Consider regularized empirical risk minimization:
\begin{equation}
\label{eq:erm_objective}
F_D(\theta) :=
\frac{1}{n}\sum_{i=1}^n \ell(f_{\theta}(x_i),y_i) + \lambda R(\theta),
\qquad
\theta(D) \in \arg\min_{\theta} F_D(\theta).
\end{equation}
Let $H_D := \nabla_{\theta}^2 F_D(\theta(D))$.

\begin{lemma}
\label{lem:theta_perturb}
Assume $F_D$ is twice differentiable and locally strongly convex at $\theta(D)$. We ignore the $1/(n+1)$ versus $1/n$ difference for notational simplicity and drop higher-order terms, which does not affect the qualitative first-order dependence on the added example. Let $D' = D\cup\{z\}$ with $z=(x,y)$. Then,
\begin{equation}
\label{eq:theta_perturb}
\theta(D') - \theta(D)
\approx
-\frac{1}{n} H_D^{-1}\nabla_{\theta}\ell(f_{\theta(D)}(x),y).
\end{equation}
\end{lemma}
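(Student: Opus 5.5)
We will derive the perturbation by the standard implicit-function (influence function) argument applied to the first-order optimality condition of the regularized ERM objective in~\Cref{eq:erm_objective}. Write the objective for the augmented set $D'=D\cup\{z\}$ (with the stated convention of keeping the $1/n$ normalization) as a one-parameter family
\begin{equation*}
F_{D,\varepsilon}(\theta) := F_D(\theta) + \varepsilon\,\ell(f_{\theta}(x),y),
\end{equation*}
so that $\varepsilon=0$ recovers $F_D$ and $\varepsilon=1/n$ corresponds to $F_{D'}$ up to the ignored $1/(n+1)$ versus $1/n$ discrepancy. Let $\theta_\varepsilon$ denote the minimizer of $F_{D,\varepsilon}$ near $\theta(D)$. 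It satisfies the stationarity condition $\nabla_\theta F_D(\theta_\varepsilon) + \varepsilon\,\nabla_\theta \ell(f_{\theta_\varepsilon}(x),y)=0$, and at $\varepsilon=0$ we have $\nabla_\theta F_D(\theta(D))=0$.

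The first key step is to establish that $\varepsilon\mapsto\theta_\varepsilon$ is well defined and differentiable near $0$. Twice differentiability and local strong convexity at $\theta(D)$ make $H_D=\nabla^2_\theta F_D(\theta(D))$ positive definite, hence invertible. The Jacobian of the stationarity map with respect to $\theta$ at $(\varepsilon,\theta)=(0,\theta(D))$ is exactly $H_D$, so the implicit function theorem gives a unique local solution branch $\theta_\varepsilon$, which moreover is a local minimizer by continuity of the Hessian. We will need $\ell$ to be $C^2$ in $\theta$ so that the map is $C^1$; this is implicitly part of the smoothness assumption.

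The second step is to differentiate the stationarity identity at $\varepsilon=0$. This yields $H_D\,\tfrac{d\theta_\varepsilon}{d\varepsilon}\big|_{0} + \nabla_\theta \ell(f_{\theta(D)}(x),y)=0$, and therefore
\begin{equation*}
\frac{d\theta_\varepsilon}{d\varepsilon}\Big|_{\varepsilon=0} = -H_D^{-1}\nabla_\theta \ell(f_{\theta(D)}(x),y).
\end{equation*}
A first-order Taylor expansion $\theta_{1/n}=\theta(D)+\tfrac{1}{n}\tfrac{d\theta_\varepsilon}{d\varepsilon}|_0+O(1/n^2)$, after dropping the higher-order term, then gives~\Cref{eq:theta_perturb}.

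The main obstacle is being honest about the approximations rather than the calculus. The $1/(n+1)$ renormalization rescales $F_D$ by $n/(n+1)$, which does not move its minimizer but shifts the effective weight on $z$ from $1/n$ to $1/(n+1)$. It also rescales the Hessian by the same factor, so the leading term changes only by a relative $O(1/n)$ factor. This justifies the lemma's convention. The remainder is $O(1/n^2)$ only when $1/n$ lies inside the neighborhood where the implicit function theorem applies. This is exactly why the appendix remarks that the approximation is crude under severe scarcity, and we would state it as a formal first-order relation rather than a quantitative bound.
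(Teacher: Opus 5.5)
Your derivation is correct and follows the paper's argument. The paper Taylor-expands $\nabla_\theta F_{D'}$ around $\theta(D)$, uses $\nabla_\theta F_D(\theta(D))=0$, and solves the linear system; your $\varepsilon$-family and implicit function theorem make rigorous the step that expansion tacitly needs, namely that $\theta(D')$ lies on a smooth branch within $O(1/n)$ of $\theta(D)$.

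One correction to your final paragraph: because $\lambda R(\theta)$ keeps its weight, the $1/(n+1)$ normalization does not rescale $F_D$ by $n/(n+1)$ unless $\lambda=0$ (in that case the two minimizers coincide exactly). For $\lambda>0$ it adds a $z$-independent shift $-\tfrac{1}{n+1}H_D^{-1}\lambda\nabla_\theta R(\theta(D))$ of the same order as the main term. So what you can justify is only the lemma's weaker claim: the first-order dependence on the added example $z$ changes by a relative $O(1/n)$ factor, not the whole leading term.
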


\begin{proof}
The first-order optimality conditions give $\nabla_{\theta}F_D(\theta(D))=0$ and $\nabla_{\theta}F_{D'}(\theta(D'))=0$.
Apply a first-order Taylor expansion of $\nabla_{\theta}F_{D'}$ around $\theta(D)$ and solve for $\theta(D')-\theta(D)$,
which yields ~\Cref{eq:theta_perturb} up to higher-order terms.
\end{proof}

\begin{proposition}
\label{prop:first_order_utility}
Assume $L(\theta)$ in ~\Cref{eq:real_query_loss} is differentiable.
Then the marginal utility of injecting $z$ admits the approximation
\begin{equation}
\label{eq:first_order_utility_app}
\Delta U(D,\{z\})
\approx
\frac{1}{n}
\nabla_{\theta} L(\theta(D))^{\top}
H_D^{-1}
\nabla_{\theta}\ell(f_{\theta(D)}(x),y),
\end{equation}
which matches ~\Cref{eq:first_order_utility}.
\end{proposition}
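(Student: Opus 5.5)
The statement to prove is Proposition~\ref{prop:first_order_utility}. The plan is to combine the parameter perturbation of Lemma~\ref{lem:theta_perturb} with a first-order Taylor expansion of the real-query loss $L$ around $\theta(D)$, keeping only leading-order terms throughout. The steps are as follows.

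\textbf{Step 1: rewrite the utility as a loss difference.} By the definition of marginal utility in~\Cref{eq:marginal_utility},
\[
\Delta U(D,\{z\}) = L(\theta(D)) - L(\theta(D')), \qquad D' = D\cup\{z\}.
\]

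\textbf{Step 2: expand $L$ to first order.} Since $L$ is differentiable, expanding around $\theta(D)$ gives
\[
L(\theta(D')) = L(\theta(D)) + \nabla_\theta L(\theta(D))^\top\bigl(\theta(D')-\theta(D)\bigr) + o\bigl(\|\theta(D')-\theta(D)\|\bigr).
\]
Hence, up to the remainder,
\[
\Delta U(D,\{z\}) \approx -\nabla_\theta L(\theta(D))^\top\bigl(\theta(D')-\theta(D)\bigr).
\]

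\textbf{Step 3: insert the parameter shift.} Lemma~\ref{lem:theta_perturb}, which holds under local strong convexity with $H_D$ invertible, gives
\[
\theta(D')-\theta(D) \approx -\tfrac1n H_D^{-1}\nabla_\theta \ell(f_{\theta(D)}(x),y).
\]
Substituting this into Step 2, the two minus signs cancel and we obtain exactly~\Cref{eq:first_order_utility_app}.

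\textbf{Main obstacle: controlling the approximation error.} Two sources of error must be tracked.
\begin{itemize}
\item The parameter shift $\theta(D')-\theta(D)$ is $O(1/n)$, so the Taylor remainder in Step 2 is $o(1/n)$.
\item The error inside Lemma~\ref{lem:theta_perturb} is itself higher order, namely $O(1/n^2)$ together with the $1/(n+1)$ versus $1/n$ discrepancy. Multiplied by the bounded vector $\nabla_\theta L$, it stays higher order.
\end{itemize}
Together these justify writing ``$\approx$'' at first order in $1/n$. I would state this as a remark rather than a quantitative bound, consistent with the paper's purely diagnostic use of the approximation.
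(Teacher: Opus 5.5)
Your Steps 1--3 are exactly the paper's proof: expand $L$ to first order at $\theta(D)$, substitute the parameter shift from Lemma~\ref{lem:theta_perturb}, and let the two minus signs cancel. The derivation of the stated approximation is therefore correct.

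One claim in your error bookkeeping is wrong, though. Under the paper's objective $F_D=\frac1n\sum_i\ell_i+\lambda R$ with fixed $\lambda>0$, the $1/(n+1)$ versus $1/n$ discrepancy is \emph{not} higher order. At the optimum, $\frac1n\sum_i\nabla_\theta\ell_i(\theta(D))=-\lambda\nabla_\theta R(\theta(D))$, and this need not vanish (for ridge it equals $-\lambda\theta(D)$). Consequently the exactly normalized objective satisfies $\nabla_\theta F_{D'}(\theta(D))=\frac{1}{n+1}\bigl(\nabla_\theta\ell(f_{\theta(D)}(x),y)+\lambda\nabla_\theta R(\theta(D))\bigr)$. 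Equivalently, adding a point adds $\ell_z+\lambda R$ to the unnormalized objective. The true shift therefore contains an extra $z$-independent term $-\frac{\lambda}{n+1}H_D^{-1}\nabla_\theta R(\theta(D))$, which is of the same order $1/n$ as the main term.

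This is why the paper says it ignores the discrepancy ``for notational simplicity'' and that doing so leaves the first-order \emph{dependence on the added example} intact. It does not call the discrepancy negligible. Your $O(1/n^2)$ claim is valid only in two cases: when $\lambda=0$, or when the Lemma is read as the upweighting perturbation $F_D+\frac1n\ell(f_\theta(x),y)$, which is what its derivation actually analyzes. You should state one of these conventions explicitly. Alternatively, note that the extra term is constant across candidates $z$, so it does not affect how candidates are ranked.

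A minor point: an $O(1/n^2)$ remainder in the Lemma needs more than pointwise twice-differentiability. Something like a locally Lipschitz Hessian, together with a bounded $\nabla_\theta\ell$, is required.
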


\begin{proof}
Apply a first-order Taylor expansion of $L(\theta)$ around $\theta(D)$:
\begin{equation}
\label{eq:taylor_L}
L(\theta(D')) \approx L(\theta(D)) + \nabla_{\theta}L(\theta(D))^{\top}(\theta(D')-\theta(D)).
\end{equation}
Substitute Lemma~\ref{lem:theta_perturb} into ~\Cref{eq:taylor_L} and rearrange.
\end{proof}

\subsection{A Pareto view of informativeness and learnability}
\label{app:pareto_view}

This subsection provides an explanatory lens for Figure~\ref{fig:learnability}. The diagnostics are not optimized directly by the algorithm. Instead, we show that under a mild monotonicity condition, utility maximization within a matched informativeness slice prefers more learnable samples.

\paragraph{Diagnostics.}
Let $s_{\mathrm{bnd}}(x)$ denote the decision boundary score used to bucket informativeness in Section~\ref{sec:exp_mechanism}. Let $s_{\mathrm{con}}(x,y)$ denote the post hoc inconsistency score used as a proxy for learnability. Smaller $s_{\mathrm{con}}$ indicates higher learnability.

\paragraph{Matched informativeness slices.}
Fix a percentile bucket $\mathcal{B}$ induced by $s_{\mathrm{bnd}}$. We consider candidate pools whose elements fall in the same bucket $\mathcal{B}$, which matches the evaluation protocol in Figure~\ref{fig:learnability}.

\begin{assumption}[Utility monotonicity within a bucket]
\label{ass:monotone_bucket}
For a fixed dataset state $D$ and a fixed bucket $\mathcal{B}$, consider two candidate pools $S$ and $S'$ whose elements lie in $\mathcal{B}$. If the average inconsistency in $S$ is no larger than that in $S'$, then $\Delta U(D,S) \ge \Delta U(D,S')$.
\end{assumption}

\begin{proposition}[Pareto efficiency under matched informativeness]
\label{prop:pareto_bucket}
Fix $D$ and a bucket $\mathcal{B}$. Let $\mathcal{S}_{\mathcal{B}}$ be the family of candidate pools with elements in $\mathcal{B}$. Under Assumption~\ref{ass:monotone_bucket}, any maximizer of $\Delta U(D,S)$ over $\mathcal{S}_{\mathcal{B}}$ also minimizes average inconsistency over $\mathcal{S}_{\mathcal{B}}$. Equivalently, it is Pareto efficient in the plane defined by informativeness and learnability when restricted to the bucket.
\end{proposition}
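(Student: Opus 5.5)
The statement to prove is Proposition~\ref{prop:pareto_bucket}: a utility maximizer over $\mathcal{S}_{\mathcal{B}}$ also minimizes average inconsistency over $\mathcal{S}_{\mathcal{B}}$. The plan is to derive this from Assumption~\ref{ass:monotone_bucket} and then read off Pareto efficiency.

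Write $c(S)$ for the average of $s_{\mathrm{con}}$ over $S$. Let $S^\star$ maximize $\Delta U(D,\cdot)$ over $\mathcal{S}_{\mathcal{B}}$, and suppose there is $S'\in\mathcal{S}_{\mathcal{B}}$ with $c(S')<c(S^\star)$.

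Applying the assumption to the pair $(S',S^\star)$ gives $\Delta U(D,S')\ge \Delta U(D,S^\star)$. Since $S^\star$ is a maximizer, equality holds, so $S'$ is also a maximizer.

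This is the main obstacle. The assumption as stated is only a weak monotonicity: it yields ties in utility, not a contradiction. So the literal claim that every maximizer minimizes $c$ does not follow without an extra hypothesis. I would handle this in one of two ways.

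\emph{Option (a): strengthen the hypothesis.} Read the assumption strictly: $c(S)<c(S')$ implies $\Delta U(D,S)>\Delta U(D,S')$. Then $\Delta U(D,S')>\Delta U(D,S^\star)$, contradicting maximality. Hence $c(S^\star)\le c(S')$ for all $S'\in\mathcal{S}_{\mathcal{B}}$.

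\emph{Option (b): keep the weak hypothesis and weaken the conclusion.} Prove that the set of maximizers contains a $c$-minimizer whenever $c$ attains its minimum on $\mathcal{S}_{\mathcal{B}}$. Let $S_{\min}$ attain that minimum. Then $c(S_{\min})\le c(S)$ for every $S$, and the assumption gives $\Delta U(D,S_{\min})\ge \Delta U(D,S)$ for every $S$. So $S_{\min}$ is itself a utility maximizer. Under the weak assumption, any maximizer is in addition Pareto efficient in the sense of not being strictly dominated in both coordinates.

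I will state the strict reading explicitly as the operative hypothesis for the ``any maximizer'' wording, and record option (b) as the statement that holds under the weak reading.

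For the Pareto equivalence, note that within the bucket the informativeness coordinate is held fixed, since it is matched by construction of $\mathcal{B}$. Pareto efficiency in the (informativeness, learnability) plane therefore reduces to minimality of $c$. So the minimizer property shown above is exactly Pareto efficiency restricted to $\mathcal{B}$.
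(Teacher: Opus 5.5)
Your argument is correct and is the same direct argument the paper gives. The paper's proof just asserts that within $\mathcal{B}$ the ordering by utility ``agrees'' with the ordering by negative inconsistency, and your observation that Assumption~\ref{ass:monotone_bucket} as literally stated only forces ties is right. For example, two pools in $\mathcal{B}$ with equal $\Delta U$ and $c(S_1)<c(S_2)$ satisfy the assumption, yet $S_2$ is still a maximizer. The paper's step therefore tacitly uses the converse implication $\Delta U(D,S)\ge\Delta U(D,S')\Rightarrow c(S)\le c(S')$, which is exactly equivalent to your strict reading in option (a).

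The one thing to drop is the side remark in option (b) that, under the weak assumption, every maximizer is not strictly dominated in both coordinates. If informativeness is literally constant on the bucket, this holds for every pool and says nothing. If informativeness varies within the bucket, the same tied example with $S_1$ also more informative than $S_2$ refutes it. Keep only the first half of (b): any $c$-minimizer is a utility maximizer.
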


\begin{proof}
The assumption states that within $\mathcal{B}$, ordering by utility agrees with ordering by negative inconsistency. Therefore any utility maximizer must achieve the smallest inconsistency among feasible pools in $\mathcal{S}_{\mathcal{B}}$.
\end{proof}

\paragraph{Implication for \our.}
Our policy is trained to maximize a plug-in estimate of $\Delta U$ and our commitment rule filters out pools whose estimated gain is uncertain. When the plug-in error is bounded as in ~\Cref{eq:plugin_error_bar}, pools with larger estimated gain also tend to have larger true gain up to the estimation error. This supports reading Figure~\ref{fig:learnability} as evidence that \our selects more learnable samples at comparable informativeness. We view this as an explanatory result rather than a global optimality guarantee for the learned policy.

\section{Additional Method Details}

\subsection{Complete \our Procedure}
\label{app:tap_algo}

We use pointwise feasibility gates $G(x;D_t)\in\{0,1\}$ to filter invalid candidates before they enter the pool. Within a window of length $K$, the running pool variable $P$ equals the pooled set $P_t^{(K)}$ at the commitment check in the main text.

\begin{algorithm}[t]
\caption{\our: Policy-Guided Tabular Augmentation}
\label{alg:tap}
\begin{algorithmic}[1]
\REQUIRE Initial labeled set $D_0$, diffusion backbone $q_\phi$, horizon $T$, window size $K$, threshold $\tau$
\STATE Initialize policy $\pi$
\STATE Initialize committed buffer $B \leftarrow \emptyset$ \hfill // cumulative across windows
\STATE Initialize pool $P \leftarrow \emptyset$ \hfill // temporary within a window
\FOR{$t=0$ \textbf{to} $T-1$}
    \STATE Set $D_t \leftarrow D_0 \cup B$
    \STATE Compute state summary $s_t$ from $(D_t, P)$
    \STATE Sample action $a_t \sim \pi(\cdot \mid s_t)$
    \STATE Propose candidates $\widetilde{S}_t \sim \mathcal{K}_{\phi}(\cdot \mid D_t, a_t)$
    \STATE Apply pointwise feasibility gates $G(\cdot;D_t)$ and admit
    \STATE \hspace{1em}$S_t \leftarrow \{x \in \widetilde{S}_t \mid G(x;D_t)=1\}$
    \STATE Compute a preference signal using $\Uhat_{\psi}(D_t,S_t)$
    \STATE \hspace{1em}(cache $\Lhat_{\psi}(D_t)$ within a window and use forward passes for $\Lhat_{\psi}(D_t\cup S_t)$)
    \STATE Update $\pi$ by regularized preference optimization
    \STATE Update pool $P \leftarrow P \cup S_t$
    \IF{$(t+1)\bmod K = 0$}
        \STATE Compute error bar $\epsilon_t$
        \IF{$\Uhat_{K,\psi}(D_t,P) > \tau + \epsilon_t$}
            \STATE Commit $B \leftarrow B \cup P$
        \ENDIF
        \STATE Reset pool $P \leftarrow \emptyset$ \hfill // discard if not committed
    \ENDIF
\ENDFOR
\STATE \textbf{return} $D_T = D_0 \cup B$
\end{algorithmic}
\end{algorithm}

\subsection{\our Mechanism Settings}
\label{app:tap_settings}

This subsection documents the \our mechanism at the level needed for reproducibility. We describe the state vector, the action parameterization, the induced proposal kernel, and the step-wise control flow. We also clarify how each action component affects the inpainting distribution.

\subsubsection{State construction}
\label{app:tap_state}

We encode the learner state using the compact summary
\begin{equation}
\label{eq:app_state_def}
s_t := (\delta_t,u_t,g_t,d_t),
\end{equation}
which mirrors ~\Cref{eq:state_def} in the main text. 
Each component is computed from the current committed dataset $D_t=D_0\cup B_t$ and the current pool $P$ (i.e., $P_t$) that has not yet been committed.

\paragraph{Target deficit $\delta_t$.}
$\delta_t$ measures coverage mismatch between the desired target mixture and the current realized mixture from real plus committed synthetic data. For classification, targets correspond to class labels. For regression, targets correspond to quantile bins.

\paragraph{Uncertainty proxy $u_t$.}
$u_t$ aggregates predictive uncertainty of the plug-in evaluator over the target partition. This quantity is aligned with the focused query set construction, which selects informative queries using the current predictor.

\paragraph{Gate statistic $g_t$.}
$g_t$ stores recent gate pass rates per mask template. It estimates feasibility for each template, which appears as $\Pr(G=1\mid s_t,a_t)$ in ~\Cref{eq:expected_gain_decomp}. 

\paragraph{Diversity score $d_t$.}
To keep the state definition temporally consistent, $d_t$ is computed before sampling $a_t$ using only the committed buffer $B_t$ and the current pool $P$. It measures redundancy by a nearest-neighbor distance between pooled samples and the committed buffer.

\subsubsection{Action parameterization and sampling}
\label{app:tap_action}

We parameterize generation by the compact action
\begin{equation}
\label{eq:app_action_def}
a_t := (c_t,\eta_t,\rho_t),
\end{equation}
where $c_t$ selects a target condition, $\eta_t$ selects a mask template, and $\rho_t\in[0,1]$ controls exploration strength. 

\paragraph{Policy factorization.}
We use a factorized stochastic policy
\[
\pi_{\omega}(a_t\mid s_t)
=
\pi_{\omega}(c_t\mid s_t)\,
\pi_{\omega}(\eta_t\mid s_t)\,
\pi_{\omega}(\rho_t\mid s_t).
\]
The discrete components $(c_t,\eta_t)$ are sampled from categorical distributions parameterized by logits produced by an MLP on $s_t$. The continuous component $\rho_t$ is sampled from a Gaussian whose mean and scale are also produced by the same MLP, and the sampled value is clamped to $[0,1]$.

\subsubsection{Controlled proposal kernel induced by the action}
\label{app:tap_kernel}

An action $a=(c,\eta,\rho)$ induces a proposal distribution through anchor selection, mask construction, and diffusion sampling randomness. We write the induced proposal family as
\begin{equation}
\label{eq:app_proposal_family}
Q_a(\cdot\mid D_t)
=
\mathbb{E}_{x\sim p_{\mathrm{anc}}(\cdot\mid D_t,c)}
\mathbb{E}_{m\sim p_{\mathrm{mask}}(\cdot\mid \eta,\rho)}
\left[q_{\phi}(\cdot\mid x,m,c)\right],
\end{equation}
and we sample a candidate batch by $\widetilde{S}_t\sim \mathcal{K}_{\phi}(\cdot\mid D_t,a_t)$. 

\paragraph{Anchor selection $p_{\mathrm{anc}}(\cdot\mid D_t,c)$.}
We select anchor records from the current dataset restricted to the target condition $c$.
For classification, $c$ indexes a class and anchors are drawn from that class.
For regression, $c$ indexes a quantile bin and anchors are drawn from that bin.
Within the restricted set, anchors are chosen using a mixture of hard-sample preference and uniform sampling.
Hardness is measured by the plug-in evaluator's uncertainty or error.

\paragraph{Mask template $p_{\mathrm{mask}}(\cdot\mid \eta,\rho)$.}
A mask $m\in\{0,1\}^d$ indicates regenerated coordinates, where $m_j=1$ means regenerate feature $j$ and $m_j=0$ means keep it fixed.
We implement two templates.

\begin{itemize}
    \item \textbf{Explore template.}
    This template fixes only the label condition and allows regeneration across all non-label features.

    \item \textbf{Conservative template.}
    This template fixes the label condition and fixes an additional set of important columns, determined by a combination of mutual information with the target and bootstrap stability of feature statistics.
\end{itemize}

\paragraph{Exploration strength $\rho$.}
The scalar $\rho$ controls locality through mask construction. When $\rho<1$, we additionally fix a random subset of numeric features that would otherwise be regenerated. Smaller $\rho$ therefore yields more conservative near-anchor proposals.

\subsubsection{Hard feasibility gates and admission}
\label{app:tap_gating}

We apply pointwise gating to every proposed sample $x\in\widetilde{S}_t$. The gate function $G(x;D_t)\in\{0,1\}$ checks domain constraints and rejects candidates that violate hard feasibility. The admitted set is
\begin{equation}
\label{eq:app_gating_set}
S_t := \{x\in \widetilde{S}_t \mid G(x;D_t)=1\}.
\end{equation}
In our implementation, $G$ includes at least the following checks.

\begin{itemize}
    \item \textbf{Type validity.}
    All categorical features must take values within the set observed in the real training data, and all numeric features must be finite.

    \item \textbf{Range validity.}
    Numeric features must lie within a conservative range derived from real training statistics. In all experiments, numeric features are clipped to the quantile range $[q_{\min}, q_{\max}]$ computed on the real training split, with $q_{\min}=0.01$ and $q_{\max}=0.99$ in all experiments.

    \item \textbf{Task-dependent logical checks.}
    If the dataset includes known logical relations, we enforce them as deterministic constraints.
\end{itemize}

\subsubsection{Conservative windowed commitment}
\label{app:tap_commitment}

Let $P$ denote the running pool variable at a commitment check, which corresponds to $P_t^{(K)}$ in the main text. We compute pooled plug-in utility
\begin{equation}
\label{eq:app_pooled_plugin}
\Uhat_{K,\psi}(D_t,P) := \Lhat_{\psi}(D_t) - \Lhat_{\psi}(D_t\cup P),
\end{equation}
and we commit the pool only when $\Uhat_{K,\psi}(D_t,P)>\tau+\epsilon_t$. This implements a confidence-based selection rule that reduces tail risk from noisy per-step estimates and captures complementarities among admitted candidates.

\subsection{Preference-Based Policy Optimization Details}
\label{app:kto}

We implement the regularized improvement objective in ~\Cref{eq:kl_obj} using a preference-based update. At each step we compute a scalar advantage $\widehat{A}_t$ from the plug-in utility and convert it into binary feedback with abstention.

\paragraph{From KL regularized improvement to preference learning.}
In the main text we optimize a KL regularized improvement objective against a conservative reference policy $\pi_{\mathrm{ref}}$, namely
\begin{equation}
\label{eq:app_kl_obj}
\max_{\pi}\;
\mathbb{E}\!\left[\widehat{A}_t\right]
-\beta\,\mathrm{KL}\!\left(\pi(\cdot\mid s_t)\,\|\,\pi_{\mathrm{ref}}(\cdot\mid s_t)\right),
\end{equation}
where $\widehat{A}_t$ is a baseline-corrected advantage computed from the plug-in utility. 

Direct regression on $\widehat{A}_t$ can be unstable because utility estimates may be heavy tailed and their scale can drift across scarcity regimes. We therefore instantiate ~\Cref{eq:app_kl_obj} using a preference-based update that only requires \emph{binary} feedback.

\paragraph{Binary feedback construction.}
At each step $t$ we map the scalar advantage $\widehat{A}_t$ to a preference signal $z_t\in\{+1,-1,\emptyset\}$,
\begin{equation}
\label{eq:app_pref_def}
z_t :=
\begin{cases}
+1, & \widehat{A}_t > \kappa_t, \\
-1, & \widehat{A}_t < -\kappa_t, \\
\emptyset, & \text{otherwise},
\end{cases}
\end{equation}
where $\kappa_t$ is an adaptive threshold based on a running scale estimate. Samples with $z_t=\emptyset$ are skipped, which reduces gradient noise when the estimated advantage magnitude is small.

\paragraph{Log ratio parameterization.}
Define the per-sample log ratio
\begin{equation}
\label{eq:app_log_ratio}
r_{\omega}(s_t,a_t) := \log \pi_{\omega}(a_t\mid s_t) - \log \pi_{\mathrm{ref}}(a_t\mid s_t).
\end{equation}
This quantity is a control variable that measures how strongly the learned policy deviates from the conservative reference on the sampled action. It also appears naturally in the solution structure of KL regularized policy improvement, where the optimal policy has the form $\pi^{\star}(a\mid s)\propto \pi_{\mathrm{ref}}(a\mid s)\exp(\widehat{A}(s,a)/\beta)$.

\paragraph{KTO style objective.}
We instantiate KTO~\citep{kto} as a \emph{direct} optimizer over the policy that pushes $r_{\theta}$ upward on desirable actions and downward on undesirable actions, with asymmetric weighting that reflects loss aversion.
Concretely, for a minibatch $\mathcal{B}$ of labeled tuples $(s_t,a_t,z_t)$ with $z_t\neq\emptyset$, we minimize
\begin{equation}
\label{eq:app_kto_loss}
\mathcal{L}_{\mathrm{KTO}}(\omega)
=
-\lambda_{D}\,
\mathbb{E}_{(s,a,z)\sim\mathcal{B}}\!\left[\mathbbm{1}\{z=+1\}\log \sigma\!\left(r_{\omega}(s,a)\right)\right]
-\lambda_{U}\,
\mathbb{E}_{(s,a,z)\sim\mathcal{B}}\!\left[\mathbbm{1}\{z=-1\}\log \sigma\!\left(-r_{\omega}(s,a)\right)\right],
\end{equation}
where $\sigma(\cdot)$ is the logistic sigmoid, and $\lambda_{D},\lambda_{U}>0$ are adaptive weights.

~\Cref{eq:app_kto_loss} is a stable preference objective that avoids value function fitting. It increases $\pi_{\theta}(a\mid s)$ relative to $\pi_{\mathrm{ref}}(a\mid s)$ when $z=+1$, and it decreases it when $z=-1$. This operationalizes ~\Cref{eq:app_kl_obj} using sign information from $\widehat{A}_t$ rather than its raw magnitude.

\paragraph{Adaptive weighting.}
We set $\lambda_D$ and $\lambda_U$ using the observed ratio of desirable versus undesirable samples in the minibatch. The goal is to avoid regimes where nearly all feedback is of one type, which would otherwise cause either overly aggressive deviation from $\pi_{\mathrm{ref}}$ or overly conservative updates.
A simple instantiation is
\[
\lambda_D \propto \frac{1}{\max(1,|\{z=+1\}|)}\qquad\text{and}\qquad
\lambda_U \propto \frac{1}{\max(1,|\{z=-1\}|)}.
\]
Any equivalent normalization that balances the two terms is acceptable.

\paragraph{Reference policy.}
The reference policy $\pi_{\mathrm{ref}}$ is heuristic and conservative.
It prioritizes actions that reduce target deficits and it reduces exploration when feasibility is low, which stabilizes learning early in training. This choice makes the KL regularizer in ~\Cref{eq:app_kl_obj} operationally meaningful because it anchors learning to a safe default.

\paragraph{Implementation note for mixed action types.}
Our action $a=(c,\eta,\rho)$ includes discrete components $(c,\eta)$ and a continuous component $\rho\in[0,1]$. We factorize the policy as
\begin{equation}
\pi_{\theta}(a\mid s)=\pi_{\theta}(c\mid s)\,\pi_{\theta}(\eta\mid s)\,\pi_{\theta}(\rho\mid s),
\end{equation}
so $\log \pi_{\theta}(a\mid s)$ decomposes additively. For $\rho$ we use a Gaussian policy and clamp its sampled value to $[0,1]$.
When computing $\log \pi_{\theta}(\rho\mid s)$, we use the pre-clamp density evaluation, which yields stable gradients in practice.

\section{Experimental Details}

\subsection{Datasets}
\label{app:datasets}
We evaluate on seven real world tabular datasets that span healthcare, finance, science, and operations. The first six datasets are publicly available on OpenML~\citep{bischl2017openml}, and the seventh dataset Insurance is sourced from Kaggle. Dataset statistics, task types, and feature compositions are reported in Table~\ref{tab:datasets}.
\begin{table*}[ht] 
    \centering
      \caption{\textbf{Statistics of the real-world datasets used in our experiments.} \# Samples, \# Features and \# Classes denote the numbers of samples, features and classes in tabular datasets, respectively.} 
      \label{tab:datasets}
    \renewcommand{\arraystretch}{1.1}
    \footnotesize
    \begin{threeparttable}
    \begin{tabular}{lccccccc}
            \toprule[0.8pt]
            Dataset  & Domain & \# Samples & \# Features & Task   & \# Classes \\
            \midrule
            MiceProtein~\citep{higuera2015self} & Medical     & 1,080      & 77           & Classification & 8 \\
            Credit-G~\citep{asuncion2007uci}  & Finance     & 16,087   & 20     & Classification    & 2 \\
            Electricity~\citep{gama2004learning}  & Energy     & 45,312     & 8             & Classification & 2 & \\
            Fourier~\citep{asuncion2007uci} & Synthetic & 2,000 & 76 & Classification & 10 \\
            Steel~\citep{asuncion2007uci}    & Manufacturing      & 1,941   & 33   & Classification & 2 \\
            Ailerons~\citep{grinsztajn2022tree}   & Engineering & 13,750   & 40        & Regression     & - \\
            Insurance (From Kaggle$^{1}$) & Finance    & 1,338   & 6     & Regression & - \\
        \bottomrule[1.0pt] 
        \end{tabular}
  \begin{tablenotes}
    \item[1] https://www.kaggle.com/datasets/mirichoi0218/insurance
  \end{tablenotes}
  \end{threeparttable}
\end{table*}

\paragraph{Data splitting.}
We follow the data splitting protocol used in TabEBM~\citep{tabebm}. Given a dataset of size $N$, we first construct a held-out test set of size $N_{\mathrm{test}} = \min\!\left(\left\lfloor \frac{N}{2} \right\rfloor, 500\right)$, and we denote the remaining examples as the oracle set, with size $N_{\mathrm{oracle}} = N - N_{\mathrm{test}}$. The oracle set is used only for simulating data scarcity. For each scarcity level $n_{\mathrm{real}} \in \{20, 50, 100, 200, 500\}$, we sample a real labeled subset of size $n_{\mathrm{real}}$ from the oracle set and split it into a real training split and a real validation split with an $80\%$ to $20\%$ ratio. The generator and the augmentation policy are trained using only the real training split. Synthetic samples are injected only into the downstream predictor training set. The real validation set is used for model selection and early stopping when applicable. The real test set is used only for final evaluation. We repeat this procedure over five random splits for each dataset and each $n_{\mathrm{real}}$. When computing plug-in utility for policy learning, we construct cross-validation folds from the real training split to define real query examples. The evaluator is conditioned on the current committed dataset $D_t=D_0\cup B_t$ during policy learning.

\subsection{Implementation Details}
We implement \our in PyTorch~\citep{paszke2019pytorch} and follow Algorithm~\ref{alg:tap} in Appendix~\ref{app:tap_algo}. At decision step $t$, the policy observes the state summary $s_t$ and samples an action $a_t=(c,\eta,\rho)$ as in ~\Cref{eq:action_def}. The action instantiates a proposal kernel $\mathcal{K}_{\phi}(\cdot \mid D_t,a_t)$ through anchor selection, mask construction, and diffusion inpainting. Candidates are filtered by pointwise feasibility gates, accumulated into a pool, and committed with windowed evaluation every $K$ steps. Policy updates use a KL-regularized preference objective, with details in Appendix~\ref{app:kto}. \our employs the same hyperparameter setting for all datasets and scarcity levels unless otherwise stated. All experiments were conducted on an NVIDIA A100 GPU with 80GB memory.

\subsubsection{Diffusion Backbone}
Tabular datasets are typically mixed-type, combining continuous and categorical fields that obey strong cross-column constraints. We use TabDiff~\citep{tabdiff} as the diffusion backbone $q_{\phi}$ for all \our experiments because it models mixed-type tables with type-aware noise schedules, which yields strong generative fidelity and provides a reliable backbone for inpainting proposals.

Given an anchor record $x$ and a binary regeneration mask $m \in \{0,1\}^d$, inpainting samples a synthetic record by conditioning on the fixed coordinates,
\begin{equation}
x^{\mathrm{syn}} \sim q_{\phi}(x_m \mid x_{\bar m}, c),
\end{equation}
where $c$ denotes the target condition and $x_{\bar m}$ denotes the fixed columns. We implement inpainting by overwriting fixed coordinates at every reverse diffusion step as in ~\Cref{eq:overwrite}. 

We adopt TabDiff with its default training and sampling configuration. Specifically, we train TabDiff for 8000 steps with learning rate $10^{-3}$ and batch size 4096, using EMA decay 0.997. For generation, we use stochastic sampling with second-order correction. The backbone is trained on the real training split and frozen during policy learning.

\subsubsection{Online utility estimation.}
For policy learning, we estimate plug-in utility via $M_{\mathrm{cv}}$-fold cross-validation folds constructed from the real training split, so the policy does not access validation labels. For each fold, the evaluator is conditioned on the remaining folds together with $B_t$, and the loss is computed only on the held-out fold. The same fold-wise construction is used when evaluating $\Lhat_{\psi}(D_t)$ and $\Lhat_{\psi}(D_t\cup S)$. Within each fold, we evaluate loss reduction on a focused subset of query examples given by the top-$\alpha$ fraction ranked by informativeness. We use predictive entropy for classification and predictive residual magnitude for regression.

We instantiate the plug-in evaluator $\psi$ with TabPFN~\citep{hollmann2025tabpfn}, a prior-data fitted network that performs in-context learning without requiring gradient updates. This training-free property is essential for our iterative utility evaluation, as each decision step requires multiple forward passes through the evaluator. Traditional models would need retraining at each step, making the approach computationally prohibitive. TabPFN provides stable few-shot predictions by conditioning on the training set as context, which aligns well with the severe scarcity regime we target.

\subsubsection{Hyperparameters}
\label{app:hyperparams}

Table~\ref{tab:tap_hparams} lists the key hyperparameters for \our.
\begin{table}[htbp]
\centering
\footnotesize
\caption{\textbf{Key \our hyperparameters used in all experiments.} We keep the same setting across all datasets and scarcity levels. The table summarizes the policy network and optimizer, the KTO-style preference update, the cross-validated plug-in utility estimation used for policy learning, and the generation, gating, and windowed commitment settings that control safe injection.}
\label{tab:tap_hparams}
\begin{tabular}{lll}
\toprule
Category & Hyperparameter & Value \\
\midrule
\multirow{3}{*}{Policy}
& MLP layers / hidden width & 2 / 128 \\
& Optimizer / learning rate & AdamW / $3\times 10^{-4}$ \\
& Max grad norm & 0.5 \\
\midrule
\multirow{2}{*}{Preference update}
& KL coefficient $\beta$ & 3.0 \\
& Feedback quantile / window & 0.6 / 200 \\
\midrule
\multirow{2}{*}{Utility estimation}
& CV folds $M$ & 5 \\
& Focused query ratio $\alpha$ & 0.2 \\
\midrule
\multirow{2}{*}{Generation}
& Candidates per step & 16 \\
& Synthetic budget $n_{\mathrm{syn}}$ & 500 \\
\midrule
\multirow{2}{*}{Commitment}
& Commit interval $K$ & 20 \\
& Commit threshold $\tau$ & 0 \\
\midrule
\multirow{4}{*}{Gating}
& Numeric clipping quantiles & $[0.01,0.99]$ \\
& Classification $p_{\min}$ / margin & 0.3 / 0.1 \\
& Regression residual percentile & 95 \\
& Diversity threshold & 0.1 \\
\midrule
Regression only
& Number of target bins & 7 \\
\bottomrule
\end{tabular}
\vspace{-10pt}
\end{table}

\subsection{Baseline Configurations}
\label{app:baseline}

We compare against SMOTE, TVAE, CTGAN, ARF, SPADA, TabDDPM, and TabDiff. We use the official implementations with recommended defaults. We use the same synthetic budget $n_{\mathrm{syn}}=500$ and the same data splits for all methods. In addition, we evaluate several injection rules under a shared diffusion backbone for mechanism analysis.

\paragraph{Hard inpainting.}
\emph{Hard inpainting} is used only in the mechanism analysis. It shares the same diffusion backbone and pointwise feasibility gates as \our, but removes policy learning, sequential feedback, and windowed commitment. It selects anchors from the current dataset with high plug-in uncertainty, using predictive entropy for classification and residual magnitude for regression. We then generate the full synthetic budget in one shot using the conservative template and set $\rho=0.3$ in all experiments.

\paragraph{Adapting SMOTE to regression and low-sample regimes.}
Standard SMOTE is designed for classification and oversamples minority classes by interpolating in feature space. Following~\citet{zhang2023mixed}, we adapt SMOTE to regression by operating in the joint space $(X,y)$. We construct a binary discrimination problem in which real samples are labeled as 0 and randomly generated Gaussian noise is labeled as 1. We then apply SMOTE to interpolate within the real class in the joint space and retain only synthetic samples classified as real by the discriminator. This procedure produces continuous target values while preserving feature-target correlations.

In low-sample regimes, the default $k=5$ nearest neighbors can exceed the available samples. We therefore set $k = \min(5, n_{\min}-1)$, where $n_{\min}$ is the minimum class size for classification or the total sample count for regression. If $k<1$, we skip SMOTE and fall back to bootstrap resampling.

\paragraph{Anomalous entries in Table~\ref{tab:mle}.}
TabDDPM on Ailerons is unstable under scarce training in our runs, leading to high variance and occasional failure cases. We use the official implementation with its recommended defaults and report the observed outcomes under the same splits and synthetic budget as other methods. Per-predictor tables in Appendix~\ref{app:per_pred_results} confirm that our main conclusions are not driven by a single downstream model.

\subsection{Evaluation Protocol}

\paragraph{Metrics.}
For classification tasks, we report Accuracy and Macro F1. For regression tasks, we report RMSE and MAE. Following~\citet{zhang2023mixed}, we standardize continuous targets on the real training split when training regressors for RMSE based evaluation. We apply the same transformation to validation and test targets, and we report metrics on the standardized scale for consistent aggregation across datasets.

\paragraph{Downstream predictors.}
We evaluate augmentation quality using six classifiers and four regressors. The classifiers are Logistic Regression, KNN, MLP, Random Forest, XGBoost, and LightGBM. The regressors are KNN, Random Forest, XGBoost, and LightGBM. All ensemble methods use 100 estimators. KNN uses $k=5$ neighbors. MLP uses a single hidden layer with 100 hidden units and a maximum of 500 iterations. Logistic Regression uses a maximum of 1000 iterations. All models use \texttt{random\_state=42} for reproducibility.

\paragraph{Aggregation and repetitions.}
For each dataset and each scarcity level, we repeat experiments over five random splits. We report the mean and standard deviation across splits. When presenting aggregate tables, we further average performance across downstream predictors within each task type.

\section{Additional Analyses}
\label{app:diagnostics}

\subsection{Policy Learning Dynamics}
\label{app:learning_dynamics}

To verify that the policy learns meaningful behavior, we compare \our against a \textbf{No-Learn} baseline that freezes the policy (skipping preference optimization in Algorithm~\ref{alg:tap}) while keeping all other components identical.

\paragraph{Evaluation protocol.}
For each decision step, we compute a proxy reward using a held-out validation signal not used during training:
\begin{equation}
r^{\mathrm{proxy}}_t = L(\theta(D_t), Q_{\mathrm{proxy}}) - L(\theta(D_t \cup S_t), Q_{\mathrm{proxy}}),
\end{equation}
where $Q_{\mathrm{proxy}}$ is reserved for diagnostics only. An action is \emph{desirable} if $r^{\mathrm{proxy}}_t > 0$. We report desirable rate per commitment window, aggregated over all datasets at $n_{\mathrm{real}}=50$, a scarcity level where augmentation is impactful yet sufficient signal exists for policy learning.

\paragraph{Results.}
Figure~\ref{fig:learning_dynamics} shows desirable rates across commitment windows. In early windows (0-4), both methods perform similarly because \our initializes from a conservative reference policy and KL regularization constrains gradual deviation. As training progresses, \our increasingly outperforms No-Learn. The gap is most pronounced in windows 5-8, where \our achieves desirable rates around 0.50-0.60 compared to 0.20-0.45 for No-Learn. This pattern confirms that the policy learns to improve upon the reference as preference feedback accumulates.

\begin{figure}[ht]
    \centering
    \includegraphics[width=0.45\linewidth]{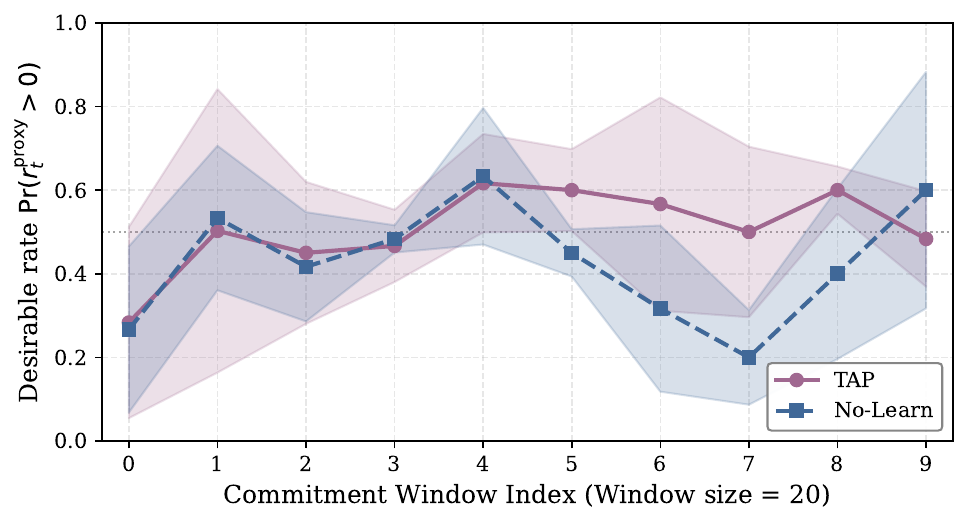}
    \caption{\textbf{Desirable rate (proxy reward $> 0$) across commitment windows, aggregated over all datasets at $n_{\mathrm{real}}=50$.} \our learns to outperform the frozen baseline as training progresses.}
    \label{fig:learning_dynamics}
\end{figure}

\subsection{Sensitivity to Commitment Hyperparameters}
\label{app:hyperparam_sensitivity}

We analyze sensitivity to the commitment window size $K$ and the threshold $\tau$ used in the conservative commitment rule.

To summarize robustness without listing all per-dataset sweeps, we report the worst-case accuracy drop from the default setting over the tested grid
\begin{equation}
\label{eq:worstdrop_acc}
\mathrm{WorstDrop}
=
\max_{h\in\mathcal{H}}
\Big(
\mathrm{Acc}(h_{\mathrm{def}})-\mathrm{Acc}(h)
\Big).
\end{equation}
Table~\ref{tab:hyperparam_sensitivity} shows aggregate $\mathrm{WorstDrop}$ statistics. The drops are small, indicating that performance is stable across a broad range of $K$ and $\tau$ values. On stable datasets such as Steel, accuracy is nearly unchanged across the grid. On more challenging settings such as MiceProtein, $\tau$ can provide a modest improvement at intermediate values, while the overall sensitivity remains mild.

\begin{table}[ht]
\centering
\caption{\textbf{Robustness to commitment hyperparameters.} We report the worst-case accuracy drop from the default setting over the tested grid, aggregated over representative settings.}
\label{tab:hyperparam_sensitivity}
\footnotesize
\begin{tabular}{lccc}
\toprule
Hyperparameter & Grid & Default & WorstDrop in accuracy (mean / 90th / max) \\
\midrule
$K$ & $\{1,5,10,20,50\}$ & $20$ & $0.011 \; / \; 0.015 \; / \; 0.015$ \\
$\tau$ & $\{0,0.02,0.05,0.1,0.2\}$ & $0$ & $0.002 \; / \; 0.004 \; / \; 0.004$ \\
\bottomrule
\end{tabular}
\end{table}

\subsection{Plug-in Utility Calibration}
\label{app:calibration}

Theorem~\ref{thm:commit_safety} assumes an error bar $\epsilon_t$ such that the pooled plug-in estimate $\Uhat_{K,\psi}(D_t,P_t^{(K)})$ provides a conservative approximation to the realized utility improvement $\Delta U(D_t,P_t^{(K)})$. We assess this empirically by comparing $\Delta\hat{U}_{K,\psi}$ against a retraining-based proxy.

\paragraph{Practical note.}
The plug-in evaluator is used only to rank candidate pools and to implement conservative commitment checks. Final results are reported by retraining standard downstream predictors on the committed augmented set, so the evaluator is not used for final reporting. Here we evaluate whether the estimated margin $\epsilon_t$ behaves as a reasonable conservative bound for commitment decisions under scarcity.

\paragraph{Setup.}
At each commitment check (every $K$ steps), we record the pooled plug-in estimate $\Delta\hat{U}_{K,\psi}(D_t,P_t)$ and its error bar $\epsilon_t$. To approximate realized utility, we retrain downstream predictors on $D_t$ and $D_t \cup P_t$, evaluate on the held-out real validation split, and average the loss reduction:
\begin{equation}
\widehat{\Delta U}(D_t,P_t) = \frac{1}{|\mathcal{H}|}\sum_{h\in\mathcal{H}} \Big( L_{\text{val}}(h(D_t)) - L_{\text{val}}(h(D_t\cup P_t)) \Big),
\end{equation}
where $\mathcal{H}$ includes LR, RF, XGBoost, LightGBM, KNN, and MLP. We use cross-entropy for classification and squared error for regression. We treat $\widehat{\Delta U}(D_t,P_t)$ as an empirical proxy since computing $\Delta U(D_t,P_t)$ exactly would require retraining for each pool.

\paragraph{Error bar estimation.}
We estimate $\epsilon_t$ from fold-to-fold variability using $M$-fold cross-validation within the training split:
\begin{equation}
\epsilon_t = t_{0.975,\,M_{\mathrm{cv}}-1} \cdot \frac{\sigma_{\mathrm{CV}}}{\sqrt{M_{\mathrm{cv}}}}, \quad M_{\mathrm{cv}}=5,
\end{equation}
where $\sigma_{\mathrm{CV}}$ is the standard deviation across folds.

\paragraph{Results.}
Table~\ref{tab:calibration} reports calibration coverage and mean absolute error (MAE) at $n_{\mathrm{real}}=50$. Coverage measures the fraction of commitment checks where $|\Delta\hat{U}_{K,\psi} - \widehat{\Delta U}| \leq \epsilon_t$. We also report $\text{Mean }\epsilon_t$ to indicate the margin scale used by the commitment rule.

\begin{table}[ht]
\centering
\footnotesize
\renewcommand{\arraystretch}{0.95}
\caption{\textbf{Plug-in utility calibration at $n_{\mathrm{real}}=50$.} Coverage is the fraction of commitment checks where the plug-in error lies within the estimated error bar $\epsilon_t$. MAE is the mean absolute error between plug-in and retraining-based utility estimates.}
\label{tab:calibration}
\begin{tabular}{llccc}
\toprule
Task & Dataset & Coverage (\%) & MAE & Mean $\epsilon_t$ \\
\midrule
\multirow{6}{*}{Classification}
& MiceProtein & 92.3 & 0.018 & 0.025 \\
& Credit-G & 88.7 & 0.021 & 0.028 \\
& Electricity & 94.1 & 0.015 & 0.022 \\
& Fourier & 90.5 & 0.019 & 0.024 \\
& Steel & 93.2 & 0.016 & 0.021 \\
\cmidrule{2-5}
& Average & 91.8 & 0.018 & 0.024 \\
\midrule
\multirow{3}{*}{Regression}
& Ailerons & 90.6 & 0.034 & 0.041 \\
& Insurance & 84.1 & 0.054 & 0.068 \\
\cmidrule{2-5}
& Average & 87.4 & 0.044 & 0.055 \\
\bottomrule
\end{tabular}
\end{table}

Classification tasks achieve an average coverage of 91.8\%, close to the target 95\% level, suggesting that $\epsilon_t$ provides a useful conservative bound. Regression tasks show slightly lower coverage (87.4\%) due to higher variance in squared error loss under scarcity. Insurance exhibits the lowest coverage (84.1\%), consistent with its high outcome variance observed in Table~\ref{tab:mle}. Despite this, all datasets maintain coverage above 80\%, and MAE remains well below the corresponding $\epsilon_t$, supporting the use of $\epsilon_t$ as a conservative margin across both task types. In addition, Appendix~\ref{app:ablation} varies the online estimator and shows that overall gains are not tied to a single evaluator choice.

\subsection{Computational Cost}
\label{app:cost}

A recurring concern for sequential augmentation is whether the gains come with prohibitive overhead. We therefore report cost in the same way the components are used in practice. For each dataset split and scarcity setting, the diffusion backbone is trained once on the corresponding real training split and then reused across the diffusion-based injection mechanisms evaluated under that backbone. The injection loop is executed separately for each method, data split, and scarcity setting.

\paragraph{Shared versus method-specific components.}
Our pipeline has three stages.
(i) \emph{Backbone training:} we train TabDiff on the real training split and freeze it thereafter. This cost is shared by all diffusion-based mechanisms under the same backbone.
(ii) \emph{Online injection:} we run an injection loop that repeatedly samples candidates by diffusion inpainting and evaluates a training-free plug-in utility signal to decide what to admit and when to commit.
(iii) \emph{Downstream training:} we train the final predictors for evaluation, which is identical across all methods and thus excluded here.
Accordingly, we report backbone training time and the online injection time separately.

\paragraph{Scaling of the online loop.}
Let $T$ be the decision horizon and $K$ the commitment interval. A key efficiency feature is that the committed dataset $D_t=D_0\cup B_t$ changes only at commitment times, so $D_t$ is fixed within each window. We therefore compute $\Lhat_{\psi}(D_t)$ once per window and reuse it across step-wise evaluations, plus one pooled evaluation at the commitment check. With $M_{\mathrm{cv}}$-fold cross-validation, the number of evaluator forward passes scales as
\[
O\!\left(M_{\mathrm{cv}}\left(T + \left\lceil \frac{T}{K} \right\rceil \right)\right),
\]
while diffusion sampling scales with the total number of proposed records,
\[
O\!\left(\sum_{t=0}^{T-1} |\widetilde{S}_t|\right).
\]
In our implementation, the policy update is a lightweight MLP step and is typically negligible compared to diffusion sampling and evaluator queries.

\paragraph{Wall-clock measurements.}
Table~\ref{tab:runtime} reports wall-clock time at $n_{\mathrm{real}}=50$ for producing and injecting $n_{\mathrm{syn}}=500$ samples (mean over five runs). We include two reference mechanisms under the same diffusion backbone. \emph{Global sampling} isolates pure diffusion sampling without anchoring, while \emph{Hard inpainting} already uses anchored proposals and targets uncertain regions but removes sequential feedback and conservative windowed commitment. Overall, \our is in the same order of magnitude as \emph{Hard inpainting} across datasets, and can be either slightly faster or slower depending on how often conservative commitment filters low-confidence pools.
\begin{table}[ht]
\centering
\footnotesize
\renewcommand{\arraystretch}{1.1}
\caption{\textbf{Wall-clock runtime (seconds) at $n_{\mathrm{real}}=50$ for producing an injected budget of $n_{\mathrm{syn}}=500$ samples under a shared diffusion backbone.} We report mean time over 5 random splits measured under the same hardware and data-splitting protocol.}
\label{tab:runtime}
\begin{tabular}{lccc|c}
\toprule
Dataset & Backbone training & Global sampling & Hard inpainting & \our \\
\midrule
MiceProtein & 227.6 & 2.1  & 436.4  & 417.4 \\
Credit-G & 380.0 & 12.8 & 732.8  & 562.0 \\
Electricity & 214.6 & 1.3 & 212.4  & 202.1 \\
Fourier & 209.0 & 1.8  & 603.9  & 662.7 \\
Steel & 212.5 & 1.3  & 130.6  & 131.1 \\
Ailerons & 312.6 & 6.3  & 1196.5 & 756.1 \\
Insurance & 264.6 & 2.6  & 384.9  & 405.4 \\
\bottomrule
\end{tabular}
\end{table}

\section{Ablation Studies}
\label{app:ablation}

This section validates that \our gains arise from policy guided control rather than from any single design choice. We focus on $n_{\text{real}}=50$, which is a regime where scarcity is strong enough for augmentation to matter while still providing sufficient signal for policy learning. This is also the regime where \our yields large improvements over baselines in Table~\ref{tab:mle}.

\subsection{Configurations}

We study three families of ablations that align with our theoretical framing. The first family tests the necessity of the state summary used for action ranking. The second family tests whether the action components behave as effective control knobs for the proposal kernel. The third family tests whether the policy depends critically on the specific online utility estimator.

\paragraph{State ablation.}
We remove individual components from the state $s_t=(\delta_t,u_t,g_t,d_t)$. Here $\delta_t$ tracks target deficit, $u_t$ is a difficulty proxy computed from the online evaluator, $g_t$ tracks recent gate pass rates for each template, and $d_t$ measures novelty of admitted samples. Each ablated variant retrains the policy from scratch under the same synthetic budget.

\paragraph{Action ablation.}
We restrict or randomize the action $a=(c,\eta,\rho)$ to test the benefit of learned control. We consider fixed template variants that always use exploration or conservative masks. We also fix the exploration strength to $\rho \in \{0.2,0.5,0.8\}$ and we replace target conditioned anchor selection with uniform anchors. All other components remain unchanged.

\paragraph{Estimator ablation.}
We replace TabPFN in the online plug-in utility with two alternatives. The first is an equal-weighted ensemble of RF, LR, and MLP. The second is a holdout estimator that approximates utility by the loss change on the real validation split,
\begin{equation}
\Delta U(D,S) \approx L(\theta(D), D_{\mathrm{val}}) - L(\theta(D\cup S), D_{\mathrm{val}}).
\end{equation}
The holdout estimator is closer to the evaluation objective but it is noisier under scarcity because the validation set is small and it requires retraining at each evaluation. We implement it using Logistic Regression with \texttt{max\_iter=500} for classification and Ridge regression with $\alpha=1.0$ for regression, and we set \texttt{random\_state=42}. This holdout estimator is used only for ablation.

Note that TabPFN's training-free nature provides a significant computational advantage. Each utility evaluation with TabPFN requires only a forward pass ($\sim$10ms), whereas the holdout estimator requires retraining ($\sim$200ms per evaluation). Over a full \our run with $T=50$ steps and multiple candidates per step, this difference accumulates substantially.

\subsection{Results and Analysis}
Table~\ref{tab:ablation} reports mean classification accuracy and mean regression RMSE, averaged across datasets and five random splits.
\begin{table}[ht]
\centering
\renewcommand{\arraystretch}{1.1}
\caption{\textbf{Ablation results at $n_{\text{real}}=50$, averaged over all datasets and 5 seeds.} For classification, higher accuracy is better; for regression, lower RMSE is better. $\Delta$ denotes relative change from Full \our.}
\label{tab:ablation}
\footnotesize
\begin{tabular}{p{2.4cm}p{3.8cm}cccc}
\toprule
& & \multicolumn{2}{c}{Classification} & \multicolumn{2}{c}{Regression} \\
\cmidrule(lr){3-4} \cmidrule(lr){5-6}
Group & Configuration & Acc. (\%) $\uparrow$ & $\Delta$ & RMSE $\downarrow$ & $\Delta$ \\
\midrule
Baseline & \our & \greedy{62.02{\tiny$\pm$1.36}} & --- & \sample{0.641{\tiny$\pm$0.046}} & --- \\
\midrule
\multirow{4}{*}{State} 
& -- Deficit & 60.53{\tiny$\pm$2.00} & \textbf{--1.5\%} & 0.656{\tiny$\pm$0.059} & \textbf{+2.3\%} \\
& -- Uncertainty (NLL) & 60.41{\tiny$\pm$2.21} & \textbf{--1.6\%} & 0.659{\tiny$\pm$0.061} & \textbf{+2.9\%} \\
& -- Gate pass rate & 60.59{\tiny$\pm$0.89} & \textbf{--1.4\%} & 0.665{\tiny$\pm$0.066} & \textbf{+3.8\%} \\
& -- Diversity & 60.21{\tiny$\pm$1.07} & \textbf{--1.8\%} & 0.666{\tiny$\pm$0.060} & \textbf{+3.9\%} \\
\midrule
\multirow{6}{*}{Action} 
& Fix template: explore & 58.83{\tiny$\pm$1.59} & \textbf{--3.2\%} & 0.662{\tiny$\pm$0.044} & \textbf{+3.3\%} \\
& Fix template: conservative & 60.03{\tiny$\pm$2.73} & \textbf{--2.0\%} & 0.672{\tiny$\pm$0.069} & \textbf{+4.9\%} \\
& Fix strength $\rho{=}0.2$ & 60.87{\tiny$\pm$2.02} & \textbf{--1.1\%} & 0.662{\tiny$\pm$0.055} & \textbf{+3.2\%} \\
& Fix strength $\rho{=}0.5$ & 61.18{\tiny$\pm$2.21} & \textbf{--0.8\%} & 0.664{\tiny$\pm$0.050} & \textbf{+3.6\%} \\
& Fix strength $\rho{=}0.8$ & 60.32{\tiny$\pm$2.22} & \textbf{--1.7\%} & 0.646{\tiny$\pm$0.061} & \textbf{+0.9\%} \\
& Random anchor & 60.28{\tiny$\pm$2.01} & \textbf{--1.7\%} & 0.659{\tiny$\pm$0.055} & \textbf{+2.8\%} \\
\midrule
\multirow{2}{*}{Estimator} 
& Ensemble (RF+LR+MLP) & 60.97{\tiny$\pm$1.21} & \textbf{--1.0\%} & 0.663{\tiny$\pm$0.062} & \textbf{+3.5\%} \\
& Holdout validation & 61.30{\tiny$\pm$2.28} & \textbf{--0.7\%} & 0.661{\tiny$\pm$0.060} & \textbf{+3.1\%} \\
\bottomrule
\end{tabular}
\end{table}

\paragraph{State components support complementary drivers of gain.}
Removing any state component degrades performance, which supports the view that action ranking depends on multiple factors. Dropping the diversity score yields the largest drop, which indicates that avoiding redundant injection is critical when the synthetic budget is fixed. Removing the deficit and difficulty proxies also harms performance, which is consistent with the idea that the policy must balance coverage and informativeness to produce utility gains under scarcity. Removing gate statistics reduces performance and increases variability, which reflects the importance of anticipating feasibility when the generator is controlled through different templates.

\paragraph{Learned control dominates fixed strategies.}
Across tasks, fixed action choices underperform learned control. Pure exploration is less reliable in classification, which is consistent with the need to remain within learnable neighborhoods around anchors under strict feasibility gates. Conservative masks alone can also underperform because they reduce the search radius and slow down deficit correction in under-covered targets. Fixing $\rho$ produces different optima for classification and regression, and no single value dominates across datasets. This supports our use of a learned policy that adapts exploration strength instead of relying on manual tuning.

\paragraph{Robustness to estimator choice.}
Our framework needs an online utility estimator to train the policy. By default, we use TabPFN because it is a strong tabular foundation model and provides stable few-shot signals under scarcity, but it is not required by the method and can be replaced. A natural concern is that the policy might adapt to the estimator rather than to downstream utility. We address this in two ways. TabPFN is used only during policy learning, while evaluation excludes TabPFN and averages over heterogeneous downstream predictors, so improvements must transfer beyond the training-time estimator. We also replace TabPFN with an ensemble evaluator and a validation-based holdout estimator. The resulting policies show only modest degradation in classification and preserve the overall ordering against strong augmentation baselines, which suggests that the learned behavior reflects general injection patterns rather than estimator-specific artifacts. Regression is more sensitive, which is consistent with noisier utility estimation from small validation splits and weaker priors in lightweight estimators.

\paragraph{Takeaway.}
These ablations support the core design claims of \our. The state summary is necessary because utility gains depend jointly on coverage, difficulty, feasibility, and redundancy, and removing any component weakens action ranking. Learned control over the proposal kernel is also essential, since no fixed template or exploration setting performs reliably across tasks and datasets. Finally, while a strong foundation model such as TabPFN improves the stability of online utility signals under scarcity, the overall mechanism does not rely on a specific estimator, and the policy continues to provide gains when the estimator is replaced.

\section{Additional Downstream Utility Results}
\label{app:detailed_results}
This section reports complementary downstream results that are omitted from the main text due to space constraints. For classification, we additionally report Macro-F1. For regression, we additionally report MAE. We also provide per-predictor breakdowns to check that improvements are not driven by a single downstream model.

\subsection{Other Metrics}
Table~\ref{tab:mle_other_metrics} reports additional evaluation metrics beyond those highlighted in the main text. 

\begin{table*}[ht]
\centering
\caption{\textbf{Classification macro-F1 (\%) and regression MAE aggregated over six downstream predictors} (six classifiers and four regressors).}
\label{tab:mle_other_metrics}
\footnotesize
\renewcommand{\arraystretch}{1.1}
\resizebox{\textwidth}{!}{
\begin{tabular}{lr||c||ccccccc|c}
\toprule
Dataset & $N_{\mathrm{real}}$ &
Real &
SMOTE & TVAE & CTGAN & ARF & SPADA & TabDDPM & TabDiff & \our \\
\midrule

\rowcolor[rgb]{0.95,0.95,0.95}\multicolumn{11}{c}{\texttt{Classification (Macro-F1 $\uparrow$)}} \\
\midrule

\multirow{5}{*}{MiceProtein}
& 20
& 32.71{\tiny$\pm$3.59}
& 39.68{\tiny$\pm$4.41}
& 34.60{\tiny$\pm$4.26}
& 30.66{\tiny$\pm$3.35}
& 32.62{\tiny$\pm$5.33}
& 36.59{\tiny$\pm$4.93}
& 31.63{\tiny$\pm$4.49}
& 36.45{\tiny$\pm$3.30}
& \greedy{42.79{\tiny$\pm$4.99}} \\

& 50
& 58.59{\tiny$\pm$3.01}
& 58.64{\tiny$\pm$3.28}
& 48.94{\tiny$\pm$3.84}
& 48.74{\tiny$\pm$4.19}
& 54.30{\tiny$\pm$3.76}
& 55.39{\tiny$\pm$3.08}
& 52.91{\tiny$\pm$4.56}
& 52.98{\tiny$\pm$3.03}
& \greedy{60.57{\tiny$\pm$3.59}} \\

& 100
& 71.48{\tiny$\pm$2.30}
& 70.80{\tiny$\pm$2.03}
& 62.72{\tiny$\pm$3.03}
& 64.52{\tiny$\pm$2.67}
& 64.17{\tiny$\pm$1.98}
& 68.49{\tiny$\pm$2.01}
& 66.07{\tiny$\pm$3.08}
& 66.65{\tiny$\pm$2.82}
& \greedy{72.49{\tiny$\pm$3.82}} \\

& 200
& 85.94{\tiny$\pm$1.99}
& 85.89{\tiny$\pm$1.79}
& 78.34{\tiny$\pm$1.94}
& 80.82{\tiny$\pm$1.77}
& 80.96{\tiny$\pm$2.26}
& 84.10{\tiny$\pm$2.25}
& 81.03{\tiny$\pm$2.13}
& 80.90{\tiny$\pm$2.49}
& \greedy{86.44{\tiny$\pm$1.77}} \\

& 500
& 96.46{\tiny$\pm$0.85}
& \greedy{96.66{\tiny$\pm$0.86}}
& 93.78{\tiny$\pm$1.24}
& 93.74{\tiny$\pm$1.18}
& 94.60{\tiny$\pm$1.13}
& 96.16{\tiny$\pm$0.87}
& 93.82{\tiny$\pm$1.33}
& 95.03{\tiny$\pm$1.46}
& 96.14{\tiny$\pm$0.97} \\

\midrule

\multirow{5}{*}{Credit-G}
& 20
& 43.54{\tiny$\pm$2.64}
& 45.83{\tiny$\pm$5.79}
& 50.68{\tiny$\pm$3.17}
& 46.69{\tiny$\pm$3.44}
& 49.89{\tiny$\pm$2.73}
& 52.26{\tiny$\pm$4.41}
& 45.13{\tiny$\pm$2.11}
& 44.82{\tiny$\pm$5.71} 
& \greedy{52.54{\tiny$\pm$3.97}} \\

& 50
& 47.62{\tiny$\pm$3.68}
& 46.48{\tiny$\pm$3.47}
& 49.87{\tiny$\pm$3.59}
& 46.44{\tiny$\pm$4.03}
& 51.26{\tiny$\pm$3.05}
& 54.52{\tiny$\pm$3.76}
& 45.71{\tiny$\pm$2.67}
& 44.91{\tiny$\pm$6.27}
& \greedy{55.46{\tiny$\pm$2.42}} \\

& 100
& 47.66{\tiny$\pm$3.05}
& 47.35{\tiny$\pm$2.15}
& 49.33{\tiny$\pm$3.44}
& 47.92{\tiny$\pm$3.37}
& 51.70{\tiny$\pm$3.28}
& 58.23{\tiny$\pm$2.54}
& 47.34{\tiny$\pm$2.49}
& 50.58{\tiny$\pm$8.81}
& \greedy{58.66{\tiny$\pm$3.34}} \\

& 200
& 49.95{\tiny$\pm$2.00}
& 49.70{\tiny$\pm$1.92}
& 53.83{\tiny$\pm$2.56}
& 51.42{\tiny$\pm$2.42}
& 52.64{\tiny$\pm$3.13}
& 57.50{\tiny$\pm$3.22}
& 48.36{\tiny$\pm$3.58}
& 52.61{\tiny$\pm$8.51}
& \greedy{58.78{\tiny$\pm$2.86}} \\

& 500
& 52.41{\tiny$\pm$0.98}
& 52.32{\tiny$\pm$0.99}
& 57.45{\tiny$\pm$2.50}
& 53.39{\tiny$\pm$3.24}
& 53.75{\tiny$\pm$1.69}
& 60.81{\tiny$\pm$1.46}
& 52.15{\tiny$\pm$2.15}
& 61.55{\tiny$\pm$2.04}
& \greedy{63.01{\tiny$\pm$2.63}} \\

\midrule

\multirow{5}{*}{Electricity}
& 20
& 61.44{\tiny$\pm$5.17}
& 59.84{\tiny$\pm$5.36}
& 61.98{\tiny$\pm$5.53}
& 57.88{\tiny$\pm$5.76}
& 65.86{\tiny$\pm$6.11}
& 65.71{\tiny$\pm$7.89}
& 57.81{\tiny$\pm$6.18}
& 58.86{\tiny$\pm$7.15}
& \greedy{67.51{\tiny$\pm$8.68}} \\

& 50
& 67.83{\tiny$\pm$4.26}
& 63.38{\tiny$\pm$4.05}
& 67.46{\tiny$\pm$5.37}
& 59.43{\tiny$\pm$5.19}
& 69.58{\tiny$\pm$5.08}
& 68.68{\tiny$\pm$4.63}
& 62.04{\tiny$\pm$5.06}
& 66.37{\tiny$\pm$5.59}
& \greedy{69.68{\tiny$\pm$5.14}} \\

& 100
& 71.56{\tiny$\pm$3.74}
& 66.47{\tiny$\pm$3.85}
& 70.66{\tiny$\pm$4.96}
& 64.14{\tiny$\pm$4.35}
& \greedy{72.99{\tiny$\pm$3.24}}
& 71.82{\tiny$\pm$3.66}
& 67.89{\tiny$\pm$3.25}
& 67.54{\tiny$\pm$3.53}
& 72.67{\tiny$\pm$4.00} \\

& 200
& 73.88{\tiny$\pm$3.04}
& 68.36{\tiny$\pm$3.64}
& 72.85{\tiny$\pm$3.95}
& 70.56{\tiny$\pm$3.97}
& 74.10{\tiny$\pm$3.07}
& 73.61{\tiny$\pm$3.06}
& 69.02{\tiny$\pm$3.29}
& 71.24{\tiny$\pm$3.71}
& \greedy{74.32{\tiny$\pm$2.43}} \\

& 500
& 75.37{\tiny$\pm$2.14}
& 71.51{\tiny$\pm$2.36}
& 75.33{\tiny$\pm$2.37}
& 74.23{\tiny$\pm$2.57}
& 75.33{\tiny$\pm$2.41}
& 75.45{\tiny$\pm$2.26}
& 72.04{\tiny$\pm$2.66}
& 74.98{\tiny$\pm$1.99}
& \greedy{76.57{\tiny$\pm$1.92}} \\

\midrule
\multirow{5}{*}{Fourier}
& 20
& 35.80{\tiny$\pm$4.11}
& 38.54{\tiny$\pm$4.93}
& 38.87{\tiny$\pm$5.77}
& 26.81{\tiny$\pm$4.03}
& 37.06{\tiny$\pm$4.99}
& 35.80{\tiny$\pm$4.11}
& 29.09{\tiny$\pm$4.90}
& 29.93{\tiny$\pm$6.86}
& \greedy{39.96{\tiny$\pm$7.32}} \\

& 50
& 58.60{\tiny$\pm$2.15}
& 60.35{\tiny$\pm$1.87}
& 51.46{\tiny$\pm$2.95}
& 43.05{\tiny$\pm$3.17}
& 60.13{\tiny$\pm$2.94}
& 58.60{\tiny$\pm$2.15}
& 49.04{\tiny$\pm$3.04}
& 45.05{\tiny$\pm$3.08}
& \greedy{61.53{\tiny$\pm$2.18}} \\

& 100
& 66.83{\tiny$\pm$1.94}
& 68.05{\tiny$\pm$1.83}
& 60.85{\tiny$\pm$2.18}
& 54.50{\tiny$\pm$1.78}
& 67.64{\tiny$\pm$1.46}
& 66.83{\tiny$\pm$1.94}
& 61.18{\tiny$\pm$2.85}
& 57.46{\tiny$\pm$3.15}
& \greedy{68.50{\tiny$\pm$2.41}} \\

& 200
& 72.81{\tiny$\pm$1.46}
& 73.84{\tiny$\pm$1.32}
& 68.81{\tiny$\pm$2.46}
& 67.50{\tiny$\pm$2.89}
& 73.11{\tiny$\pm$1.78}
& 72.81{\tiny$\pm$1.46}
& 69.34{\tiny$\pm$1.35}
& 67.19{\tiny$\pm$2.67}
& \greedy{73.69{\tiny$\pm$1.53}} \\

& 500
& 77.54{\tiny$\pm$1.51}
& 77.64{\tiny$\pm$1.54}
& 74.85{\tiny$\pm$1.69}
& 74.98{\tiny$\pm$1.56}
& 76.99{\tiny$\pm$1.98}
& 77.54{\tiny$\pm$1.51}
& 76.13{\tiny$\pm$1.25}
& 74.94{\tiny$\pm$1.74}
& \greedy{77.98{\tiny$\pm$1.63}} \\

\midrule

\multirow{5}{*}{Steel}
& 20
& 62.15{\tiny$\pm$3.04}
& 65.12{\tiny$\pm$4.52}
& 59.92{\tiny$\pm$4.21}
& 55.28{\tiny$\pm$4.89}
& 56.69{\tiny$\pm$5.05}
& 62.15{\tiny$\pm$3.04}
& 62.18{\tiny$\pm$4.94}
& 70.47{\tiny$\pm$4.45}
& \greedy{73.38{\tiny$\pm$5.59}} \\

& 50
& 74.54{\tiny$\pm$3.07}
& 81.87{\tiny$\pm$3.98}
& 65.38{\tiny$\pm$3.23}
& 64.79{\tiny$\pm$4.02}
& 64.35{\tiny$\pm$3.97}
& 74.54{\tiny$\pm$3.07}
& 77.51{\tiny$\pm$7.16}
& 86.36{\tiny$\pm$3.80}
& \greedy{93.83{\tiny$\pm$2.53}} \\

& 100
& 85.78{\tiny$\pm$2.81}
& 95.17{\tiny$\pm$1.39}
& 70.61{\tiny$\pm$2.96}
& 72.62{\tiny$\pm$3.32}
& 79.91{\tiny$\pm$3.36}
& 85.78{\tiny$\pm$2.81}
& 88.66{\tiny$\pm$3.68}
& 94.84{\tiny$\pm$3.40}
& \greedy{98.31{\tiny$\pm$0.81}} \\

& 200
& 94.28{\tiny$\pm$1.51}
& 98.10{\tiny$\pm$0.51}
& 80.09{\tiny$\pm$2.03}
& 85.77{\tiny$\pm$2.53}
& 94.23{\tiny$\pm$2.13}
& 94.28{\tiny$\pm$1.51}
& 94.95{\tiny$\pm$1.42}
& 98.26{\tiny$\pm$0.70}
& \greedy{98.42{\tiny$\pm$0.59}} \\

& 500
& 98.53{\tiny$\pm$0.52}
& \greedy{99.20{\tiny$\pm$0.15}}
& 92.32{\tiny$\pm$1.50}
& 95.06{\tiny$\pm$2.23}
& 98.81{\tiny$\pm$0.38}
& 98.53{\tiny$\pm$0.52}
& 97.78{\tiny$\pm$1.43}
& 99.13{\tiny$\pm$0.26}
& 99.17{\tiny$\pm$0.39} \\

\midrule

\rowcolor[rgb]{0.95,0.95,0.95}\multicolumn{11}{c}{\texttt{Regression (MAE $\downarrow$)}} \\
\midrule

\multirow{5}{*}{Ailerons}
& 20
& 0.789{\tiny$\pm$0.13}
& 0.821{\tiny$\pm$0.15}
& 0.802{\tiny$\pm$0.17}
& 0.860{\tiny$\pm$0.17}
& 0.690{\tiny$\pm$0.13}
& 0.824{\tiny$\pm$0.11}
& 0.787{\tiny$\pm$0.15}
& 0.770{\tiny$\pm$0.13}
& \greedy{0.688{\tiny$\pm$0.13}} \\

& 50
& 0.571{\tiny$\pm$0.09}
& 0.608{\tiny$\pm$0.09}
& 0.678{\tiny$\pm$0.12}
& 0.683{\tiny$\pm$0.11}
& 0.550{\tiny$\pm$0.10}
& 0.595{\tiny$\pm$0.08}
& 0.588{\tiny$\pm$0.10}
& 0.684{\tiny$\pm$0.12}
& \greedy{0.527{\tiny$\pm$0.09}} \\

& 100
& 0.433{\tiny$\pm$0.06}
& 0.469{\tiny$\pm$0.06}
& 0.521{\tiny$\pm$0.06}
& 0.557{\tiny$\pm$0.05}
& 0.429{\tiny$\pm$0.06}
& 0.464{\tiny$\pm$0.06}
& 48.058{\tiny$\pm$95.20}
& 0.494{\tiny$\pm$0.06}
& \greedy{0.412{\tiny$\pm$0.05}} \\

& 200
& 0.413{\tiny$\pm$0.03}
& 0.427{\tiny$\pm$0.04}
& 0.466{\tiny$\pm$0.04}
& 0.489{\tiny$\pm$0.05}
& 0.407{\tiny$\pm$0.04}
& 0.422{\tiny$\pm$0.04}
& 40.292{\tiny$\pm$49.37}
& 0.441{\tiny$\pm$0.05}
& \greedy{0.398{\tiny$\pm$0.04}} \\

& 500
& 0.365{\tiny$\pm$0.03}
& 0.381{\tiny$\pm$0.03}
& 0.400{\tiny$\pm$0.02}
& 0.418{\tiny$\pm$0.04}
& 0.366{\tiny$\pm$0.03}
& 0.369{\tiny$\pm$0.03}
& 66.661{\tiny$\pm$23.18}
& 0.373{\tiny$\pm$0.03}
& \greedy{0.358{\tiny$\pm$0.02}} \\

\midrule

\multirow{5}{*}{Insurance}
& 20
& 0.693{\tiny$\pm$0.24}
& 0.690{\tiny$\pm$0.18}
& 0.735{\tiny$\pm$0.18}
& 0.928{\tiny$\pm$0.16}
& 0.837{\tiny$\pm$0.15}
& 1.050{\tiny$\pm$0.17}
& 0.677{\tiny$\pm$0.23}
& 0.823{\tiny$\pm$0.15}
& \greedy{0.587{\tiny$\pm$0.24}} \\

& 50
& 0.671{\tiny$\pm$0.23}
& 0.594{\tiny$\pm$0.19}
& 0.612{\tiny$\pm$0.17}
& 0.933{\tiny$\pm$0.19}
& 0.742{\tiny$\pm$0.13}
& 1.060{\tiny$\pm$0.19}
& 0.658{\tiny$\pm$0.24}
& 0.598{\tiny$\pm$0.19}
& \greedy{0.385{\tiny$\pm$0.15}} \\

& 100
& 0.446{\tiny$\pm$0.07}
& 0.442{\tiny$\pm$0.06}
& 0.468{\tiny$\pm$0.05}
& 1.071{\tiny$\pm$0.15}
& 0.544{\tiny$\pm$0.08}
& 0.909{\tiny$\pm$0.12}
& 0.453{\tiny$\pm$0.06}
& 0.414{\tiny$\pm$0.04}
& \greedy{0.285{\tiny$\pm$0.07}} \\

& 200
& 0.427{\tiny$\pm$0.04}
& 0.441{\tiny$\pm$0.03}
& 0.451{\tiny$\pm$0.04}
& 0.875{\tiny$\pm$0.14}
& 0.516{\tiny$\pm$0.04}
& 0.750{\tiny$\pm$0.12}
& 0.461{\tiny$\pm$0.07}
& 0.365{\tiny$\pm$0.04}
& \greedy{0.246{\tiny$\pm$0.02}} \\

& 500
& 0.429{\tiny$\pm$0.01}
& 0.453{\tiny$\pm$0.03}
& 0.410{\tiny$\pm$0.04}
& 0.814{\tiny$\pm$0.10}
& 0.538{\tiny$\pm$0.03}
& 0.496{\tiny$\pm$0.13}
& 0.484{\tiny$\pm$0.07}
& 0.280{\tiny$\pm$0.01}
& \greedy{0.242{\tiny$\pm$0.08}} \\

\bottomrule
\end{tabular}}
\end{table*}

Macro-F1 largely follows the same ordering as Accuracy across datasets, which suggests that improvements are not restricted to majority classes. MAE is consistent with RMSE in most settings, which indicates that gains are not driven only by a small number of large errors.

\subsection{Per-Predictor Results}
\label{app:per_pred_results}

The main tables average performance across downstream predictors. We additionally report per-predictor results to verify robustness across model classes.
\begin{table*}[ht]
\centering
\footnotesize
\renewcommand{\arraystretch}{1.1}
\caption{Classification accuracy ($\%$) with Logistic Regression (LR) as the downstream predictor under varying scarcity levels.}
\renewcommand{\arraystretch}{1.1}
\resizebox{0.9\textwidth}{!}{
}

\end{table*}
\begin{table*}[ht]
\centering
\footnotesize
\caption{Classification accuracy ($\%$) with Random Forest (RF) as the downstream predictor under varying scarcity levels.}
\renewcommand{\arraystretch}{1.1}
\resizebox{0.9\textwidth}{!}{
%
}
\end{table*}
\begin{table*}[ht]
\centering
\footnotesize
\caption{Classification accuracy ($\%$) with LightGBM (LGBM) as the downstream predictor under varying scarcity levels.}
\renewcommand{\arraystretch}{1.1}
\resizebox{0.9\textwidth}{!}{
%
}

\end{table*}
\begin{table*}[ht]
\centering
\footnotesize
\caption{Classification accuracy ($\%$) with XGBoost (XGB) as the downstream predictor under varying scarcity levels.}
\renewcommand{\arraystretch}{1.1}
\resizebox{0.9\textwidth}{!}{
%
}

\end{table*}
\begin{table*}[ht]
\centering
\footnotesize
\caption{Classification accuracy ($\%$) with $k$-Nearest Neighbors (KNN) as the downstream predictor under varying scarcity levels.}
\renewcommand{\arraystretch}{1.1}
\resizebox{0.9\textwidth}{!}{
%
}

\end{table*}
\begin{table*}[ht]
\centering
\footnotesize
\caption{Classification accuracy ($\%$) with a multilayer perceptron (MLP) as the downstream predictor under varying scarcity levels.}
\renewcommand{\arraystretch}{1.1}
\resizebox{0.9\textwidth}{!}{
%
}

\end{table*}
\begin{table*}[ht]
\centering
\footnotesize
\caption{Regression Root Mean Squared Error (RMSE) with Random Forest (RF) as the downstream predictor under varying scarcity levels.}
\renewcommand{\arraystretch}{1.1}
\resizebox{0.9\textwidth}{!}{
%
}

\end{table*}
\begin{table*}[ht]
\centering
\footnotesize
\caption{Regression Root Mean Squared Error (RMSE) with LightGBM (LGBM) as the downstream predictor under varying scarcity levels.}
\renewcommand{\arraystretch}{1.1}
\resizebox{0.9\textwidth}{!}{
%
}

\end{table*}
\begin{table*}[ht]
\centering
\footnotesize
\caption{Regression Root Mean Squared Error (RMSE) with XGBoost (XGB) as the downstream predictor under varying scarcity levels.}
\renewcommand{\arraystretch}{1.1}
\resizebox{0.9\textwidth}{!}{
%
}

\end{table*}
\begin{table*}[ht]
\centering
\footnotesize
\caption{Regression Root Mean Squared Error (RMSE) with $k$-Nearest Neighbors (KNN) as the downstream predictor under varying scarcity levels.}
\renewcommand{\arraystretch}{1.1}
\resizebox{0.9\textwidth}{!}{
%
}

\end{table*}


\end{document}